
\documentclass{article}
\usepackage[table,xcdraw]{xcolor}
\usepackage{microtype}
\usepackage{graphicx}
\usepackage{subfigure}
\usepackage{booktabs} 
\usepackage{hyperref}

\usepackage[accepted]{icml2025}

\usepackage{amsmath}
\usepackage{amssymb}
\usepackage{mathtools}
\usepackage{amsthm}
\usepackage[capitalize,noabbrev]{cleveref}

\theoremstyle{plain}
\newtheorem{theorem}{Theorem}[section]
\newtheorem{proposition}[theorem]{Proposition}
\newtheorem{lemma}[theorem]{Lemma}
\newtheorem{corollary}[theorem]{Corollary}
\theoremstyle{definition}
\newtheorem{definition}[theorem]{Definition}

\theoremstyle{remark}
\newtheorem{remark}[theorem]{Remark}

\usepackage{amsmath,amsfonts,bm}









\def\eqref#1{Eq.~(\ref{#1})}









\def\1{\bm{1}}










\DeclareMathAlphabet{\mathsfit}{\encodingdefault}{\sfdefault}{m}{sl}
\SetMathAlphabet{\mathsfit}{bold}{\encodingdefault}{\sfdefault}{bx}{n}

\def\gA{{\mathcal{A}}}

\def\gD{{\mathcal{D}}}
\def\gE{{\mathcal{E}}}
\def\gF{{\mathcal{F}}}

\def\gK{{\mathcal{K}}}

\def\gX{{\mathcal{X}}}
\def\gY{{\mathcal{Y}}}
\def\gZ{{\mathcal{Z}}}



\def\sR{{\mathbb{R}}}










\usepackage[textsize=tiny]{todonotes}
\usepackage{multirow}
\usepackage{makecell}
\icmltitlerunning{Understanding Model Reprogramming for CLIP via Decoupling Visual Prompts}

\begin{document}

\twocolumn[
\icmltitle{Understanding Model Reprogramming for CLIP via Decoupling Visual Prompts}

\icmlsetsymbol{equal}{*}

\begin{icmlauthorlist}
\icmlauthor{Chengyi Cai}{equal,Unimelb}
\icmlauthor{Zesheng Ye}{equal,Unimelb}
\icmlauthor{Lei Feng}{SEU,BI}
\icmlauthor{Jianzhong Qi}{Unimelb}
\icmlauthor{Feng Liu}{Unimelb}
\end{icmlauthorlist}

\icmlaffiliation{Unimelb}{School of Computing and Information Systems, The University of Melbourne}
\icmlaffiliation{SEU}{School of Computer Science and Engineering, Southeast University}
\icmlaffiliation{BI}{Idealism Technology (Beijing)}

\icmlcorrespondingauthor{Feng Liu}{fengliu.ml@gmail.com}

\icmlkeywords{Machine Learning, ICML}

\vskip 0.3in
]

\printAffiliationsAndNotice{\icmlEqualContribution} 

\begin{abstract}
Model reprogramming adapts pretrained models to downstream tasks by modifying only the input and output spaces.
\textit{Visual reprogramming} (VR) is one instance for vision tasks that adds a trainable noise pattern (i.e., a visual prompt) to input images to facilitate downstream classification.
The existing VR approaches for CLIP train a single visual prompt using all descriptions of different downstream classes.
However, the limited learning capacity may result in (1) a failure to capture diverse aspects of the descriptions (e.g., shape, color, and texture), and (2) a possible bias toward less informative attributes that do not help distinguish between classes.
In this paper, we introduce a decoupling-and-reweighting framework. 
Our \textit{decoupled visual prompts} (DVP) are optimized using descriptions grouped by explicit \textit{\textbf{c}au\textbf{se}s} (DVP-cse) or unsupervised \textit{\textbf{cl}u\textbf{s}ters} (DVP-cls).
Then, we integrate the outputs of these visual prompts with a \textit{probabilistic reweighting matrix} (PRM) that measures their contributions to each downstream class.
Theoretically, DVP lowers the empirical risk bound.
Experimentally, DVP outperforms baselines on average across 11 downstream datasets.
Notably, the DVP-PRM integration enables insights into how individual visual prompts influence classification decisions, providing a probabilistic framework for understanding reprogramming.
Our code is available at \url{https://github.com/tmlr-group/DecoupledVP}.
\end{abstract}

\section{Introduction}
\label{sec:intro}

Model reprogramming \citep{vinod2020reprogramming,chen2024model,hung2023low} is shown to be an effective approach for adapting models pretrained on abundant data to a specific task by modifying inputs and outputs without altering the model's core architecture or retraining. Among these methods, \textit{visual reprogramming} (VR) \citep{cai2024sample,cai2024bayesian,tsao2023autovp,chen2023understanding}, also known as adversarial reprogramming \cite{elsayedadversarial,tsai2020transfer}, aims to repurpose pretrained models for downstream image classification tasks by adding trainable noise patterns to the input images. VR for \textit{vision-language models} (VLMs) \citep{cai2025attribute} trains a \textit{single} noise pattern, also known as a \textit{visual prompt} (VP), to align input images with (i.e., bring the visual features closer to texture features) \textit{all} descriptions of different downstream classes, as shown in Figure \ref{fig:decouple}(a).
\begin{figure}[t]
    \centering
    \includegraphics[width=\linewidth]{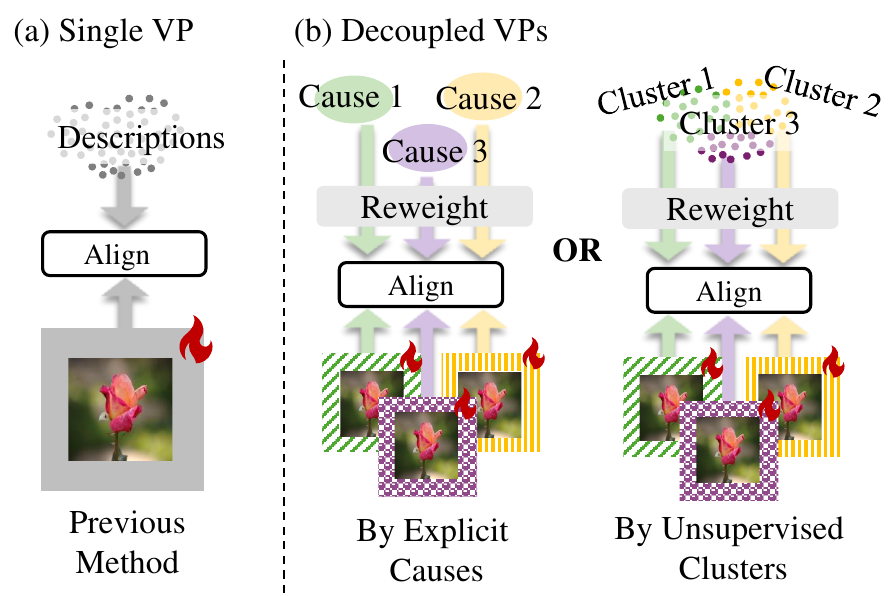}
    \vspace{-0.8cm}
    \caption{Difference between (a) existing VR methods that train a single VP for all descriptions, and (b) our DVP that trains decoupled-and-reweighted VPs that are optimized using descriptions grouped by explicit causes or unsupervised clusters. Learnable parameters are marked with `fire's.}
    \vspace{-0.2cm}
    \label{fig:decouple}
\end{figure}
\begin{figure*}[t]
    \centering
    \includegraphics[width=\linewidth]{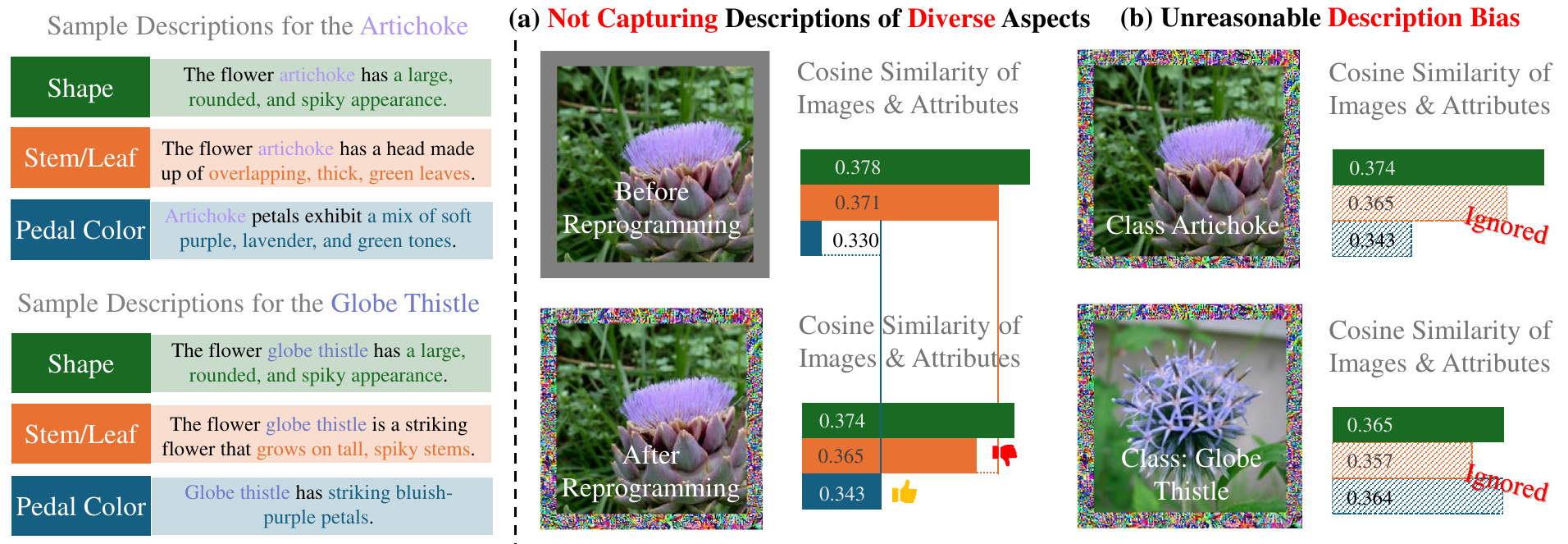}
    \vspace{-0.6cm}
    \caption{Drawbacks of training a single VP with all attribute descriptions. Sample descriptions of different aspects for `artichoke' and `globe thistle' are shown on the left, while recorded cosine similarities of images and descriptions are on the right. Figure (a) illustrates its inability to capture descriptions of diverse aspects, and Figure (b) reveals its unreasonable bias towards less informative descriptions.}
    \label{fig:motivation}
\end{figure*}

However, attribute descriptions \citep{cai2025attribute} are complex and diverse, while the learning capacity of a single VP may be limited, potentially insufficient to handle all image-description pairs. 
This may result in the VP successfully capturing descriptions in certain aspects while failing in others.
Additionally, the VP may unreasonably tend to optimize less informative descriptions that, while easier to match with images (i.e., having a higher cosine similarity), exhibit low discriminability between classes.
Figure \ref{fig:motivation} shows an example of a single VP pattern trained on the OxfordFlowers \citep{flowers} dataset for all attributes \citep{cai2025attribute}, with sample descriptions shown on the left. 
Three attributes of different aspects--`{shape}', `{stem/leaf}' and `{petal color}'--are listed for class `{artichoke}' and `{globe thistle}'. 
Figure \ref{fig:motivation}(a) shows the cosine similarity between attribute descriptions and the image `{artichoke}' before and after reprogramming using the VP pattern.
It can be observed that although adding VP enlarges the similarity between the input image and the description regarding `petal color', the description about `stem/leaf'--also being an important aspect for distinguishing artichokes--has, conversely, become more alien. This illustrates the inability of a single VP to capture descriptions of diverse aspects.
Figure \ref{fig:motivation}(b) shows the cosine similarity between the reprogrammed image `artichoke' or `globe thistle' and their corresponding descriptions.
Intuitively, both flowers share the same `shape' (i.e., large, round and spiky), while aspects like `stem/leaf' and `petal color' may be more informative for distinguishing between these two classes. However, as `shape' descriptions have a higher similarity with the images, the VP learning process favors them. It highlights how using a single VP may exhibit a bias towards less informative descriptions.

In this paper, we first formulate the problem and then introduce a decoupling-and-reweighting learning framework (as shown in Figure \ref{fig:decouple}(b)) to avoid the above drawbacks of a single VP. 
Under our framework, decoupled VPs will be optimized for different description partitions, and the importance of descriptions is distinguished through reweighting (see Section \ref{sec:pre} for details). 

In Section \ref{sec:method}, for the decoupling step, we propose \textit{decoupled visual prompts} (DVP) that are optimized using descriptions grouped by explicit \textit{causes} (DVP-cse) or unsupervised \textit{clusters} (DVP-cls). Explicit causes can be determined by querying a \textit{large language model} (LLM) while unsupervised clusters can be resolved with K-means. For the reweighting step, the output of DVP will be integrated using a \textit{probability reweighting matrix} (PRM) which is optimized during learning through \textit{maximum likelihood estimation} (MLE). We also show that DVP improves the results by lowering the empirical risk bound.

Section \ref{sec:exp} shows the application of DVP to 11 commonly used downstream datasets and four CLIP backbones, demonstrating its effectiveness. The parameter analysis, ablation experiments, and independence tests further validate the rationality of DVP. Remarkably, the visualization results of PRM reflect how individual VPs influence classification decisions, reflecting their different roles and contributions across various downstream classes.

In conclusion, both theoretical analysis and experimental results verify the soundness of DVP. The DVP-PRM integration also offers a probabilistic framework for understanding model reprogramming by exploring individual VPs.

\section{Related Works}
\label{sec:related}
\textbf{Model Reprogramming.}
Model reprogramming \citep{chen2024model} is a supervised finetuning approach that achieves transfer to downstream tasks while preserving the integrity of pretrained models, by solely modifying the input and output spaces. It has demonstrated promising applications across pretrained vision \cite{chen2023understanding,tsai2020transfer,cai2024bayesian,jin2025lorvp}, graph \citep{jing2023deep}, acoustic \citep{yang2021voice2series,yang2023english,hung2023low,yen2023neural}, and language models \citep{hambardzumyan2021warp,vinod2020reprogramming,jin2024time}.
Comparatively, it not only ensures the completeness of the pretrained model, preventing catastrophic forgetting \citep{kirkpatrick2017overcoming} but also enables architecture-agnostic transfer to downstream tasks with minimal parameter adjustments.
The robustness of model reprogramming has recently been investigated \citep{chen2025refine,zheng2025endow}.

Input VR \citep{cai2024sample,cai2024bayesian,chen2023understanding} is a type of model reprogramming, primarily used to apply pretrained models to downstream image classification tasks by adding trainable noise patterns to the input images. The main strategies include padding trainable parameters around the images \citep{chen2023understanding,tsai2020transfer,tsao2023autovp} or adding watermarks \citep{bahng2022exploring,oh2023blackvip} to rescaled images, which has been successfully applied to both unimodal image classifiers \citep{chen2023understanding,cai2024bayesian} and VLMs \citep{oh2023blackvip,Zhang_2024_CVPR}.

\textbf{Prompt Learning.} 
Slightly different from model reprogramming, prompts \citep{jia2022visual} serve as learnable weights that can be attached to any position of a pretrained model. Therefore, prompt design often needs to consider the specific architecture of the pretrained model. Prompts can be added as text prompts \citep{zhou2022learning,zhou2022conditional} among input words, as VPs \citep{chen2023understanding,oh2023blackvip, tsao2023autovp} overlaying input images, as token prompts \citep{wang2023transhp} within self-attention layers, or as mappings \cite{khattak2023maple} between modalities to connect the text and image embeddings.

Specifically, adding VPs to input images for reusing a pretrained VLM to downstream classification tasks is equivalent to VR methods for VLMs. Among recent research, VP \citep{bahng2022exploring} overlays watermarking patterns on images, AR \citep{tsai2020transfer,chen2023understanding} pads patterns around images, BlackVIP \citep{oh2023blackvip} aims at black-box transfer learning, DAM \citep{huang2023diversity} partitions the image set for divide-and-conquer training, and AttrVR \citep{cai2025attribute} introduces attributes to better align images with texts. These existing methods optimize single VP patterns for all descriptions. Our proposed DVP strives to train VPs with diverse roles and contributions, enhancing both learning capability and interpretability.

\section{Preliminaries and Insights}
\label{sec:pre}
We follow standard protocols for VR on CLIP~\citep{chen2023understanding, cai2025attribute}.
See Appendix \ref{app:setup_comparison} for a setup comparison between our problem and text prompt tuning~\citep{zhou2022conditional}, and 
Appendix \ref{app:notation} for a summary of notation.

\textbf{CLIP-based Image Classification.}
CLIP~\citep{radford2021learning} is a pre-trained VLM with an image encoder $f_{\rm img}: \mathcal{X}^{\rm S} \to \mathcal Z$ that projects input images from a $d_{\rm S}$-dimensional image space $\mathcal X^{\rm S} \subseteq \mathbb R^{d_{\rm S}}$ to embedding space $\mathcal{Z}$, and a text encoder $f_{\rm txt}: \mathcal V \to \mathcal Z$ that projects texts $V \in \mathcal{V}$ to the same embedding space $\gZ$. The similarity between an image $x^{\rm S} \in \mathcal X^{\rm S}$ and a text $V \in \mathcal V$ is:
\begin{equation}\label{eq:clip_score}
\scalebox{0.9}{$
    f_{\rm clip}(x^{\rm S}, V) = \text{cos}\left(f_{\rm img}(x^{\rm S}), f_{\rm txt}(V) \right) / \tau \notag
    $},
\end{equation}
where $\text{cos}(\cdot, \cdot)$ is cosine similarity and $\tau$ is the temperature.

When CLIP is used for a downstream classification task defined over $\mathcal X^{\rm T} \times \mathcal Y^{\rm T}$, where $\mathcal X^{\rm T} \subseteq \mathbb R^{d_{\rm T}}$ and $\mathcal Y^{\rm T}$ respectively represent the $d_{\rm T}$-dimensional image space and the label space, we often use textual descriptions $\mathcal A \subseteq \mathcal V$ to represent class labels as texts that CLIP can process.
Often, a description can be implemented as a template with placeholders, e.g., ``This is a photo of [Class Name].''~\citep{radford2021learning} or a set of attributes for the class, e.g., ``[Class Name] is [Attributes].''~\citep{cai2025attribute}.
Let $\mathcal{A}(y^{\rm T})$ be the subset of $m$ descriptions for a class $y^{\rm T} \in \mathcal{Y}^{\rm T}$, with description sets for different classes being disjoint, i.e., $\mathcal{A}(y_i^{\rm T}) \cap \mathcal{A}(y_j^{\rm T}) = \varnothing$ for $y_i^{\rm T} \neq y_j^{\rm T}$,
and $\gA = \bigcup_{y^{\rm T} \in \gY^{\rm T}} \gA(y^{\rm T})$.

Let $f_{\rm logits}: \gX^{\rm T} \times \gA \to \sR^{|\gY^{\rm T}|}$ be the logit vector over the label space $\gY^{\rm T}$.
If an input image $x^{\rm T} \in \mathcal{X}^{\rm T}$ matches CLIP's input dimension ($d_{\rm T} = d_{\rm S}$), the logit for class $y^{\rm T}$ is
\begin{equation*}
\scalebox{0.9}{$
    [ f_{\rm logits} (x^{\rm T}; \gA) ]_{y^{\rm T}} = {\tt agg}_{a \in \mathcal{A}(y^{\rm T})} \left( f_{\rm clip}(x^{\rm T}, a) \right)$},
\end{equation*}
where $a$ is a description and ${\tt agg}(\cdot)$ is an aggregation function, typically ${\tt max}(\cdot)$ or ${\tt avg}(\cdot)$. 

\textbf{VR on CLIP.} For mismatched downstream input shapes, i.e., $d_{\rm T} \neq d_{\rm S}$, standard input VR~\citep{cai2024sample} uses an input transform $f_{\rm in}: \mathbb R^{d_{\rm T}} \to \mathbb R^{d_{\rm S}}$ defined as $f_{\rm in}(x^{\rm T} | \delta) \triangleq {\tt pad}(x^{\rm T}) + \delta$, where ${\tt pad}(\cdot)$ function pads zeros around the images and VR patterns $\delta \in \mathbb R^{d_{\rm S}}$ is a trainable visual prompt. 
The logit for a class $y^{\rm T} \in \mathcal Y^{\rm T}$ given $x^{\rm T}$ and prompt $\delta$ is:
\begin{equation} \label{eq:vr_logits}
\scalebox{0.9}{$
    \left[f^{\rm vr}_{\rm logits}(x^{\rm T}; \delta, \mathcal{A}) \right]_{y^{\rm T}} = {\tt agg}_{a \in \mathcal{A}(y^{\rm T})} \left( f_{\rm clip}(f_{\rm in}(x^{\rm T}|\delta), a) \right).
    $}
\end{equation}
Then, the normalized probability will be obtained by
\begin{equation}\label{eq:vr_prob}
\scalebox{0.95}{$
    p_{\rm vr}(y^{\rm T}|x^{\rm T};\delta, \mathcal A)=
    \frac{\text{exp}(\left[f^{\rm vr}_{\rm logits}(x^{\rm T}; \delta, \mathcal{A}) \right]_{y^{\rm T}})}{\sum_{y'\in \mathcal Y^{\rm T}}\text{exp}(\left[f^{\rm vr}_{\rm logits}(x^{\rm T}; \delta, \mathcal{A}) \right]_{y'})}.
    $}
\end{equation}
The parameter $\delta$ in VR can be learned by minimizing the negative log-likelihood, computed over a downstream training set $\mathcal D = \{ (x^{\rm T}_{j}, y^{\rm T}_{j}) \}_{j=1}^{N} \mathop{\sim}\limits^{{\rm i.i.d}} \mathcal X^{\rm T} \times \mathcal Y^{\rm T}$ as,
\begin{equation}\label{eq:vr_optim}
\scalebox{0.9}{$
    \delta^* = \arg \min_{\delta} - \frac{1}{N} \sum\nolimits_{j=1}^N \left[ \log p_{\rm vr}(y_j^{\rm T}|x_j^{\rm T};\delta, \mathcal A) \right],
    $}
\end{equation}
where $N = n \times |\mathcal Y^{\rm T}|$ is the size of training dataset with $n$ samples per class.

\begin{figure*}[!h]
    \centering
    \includegraphics[width=0.95\linewidth]{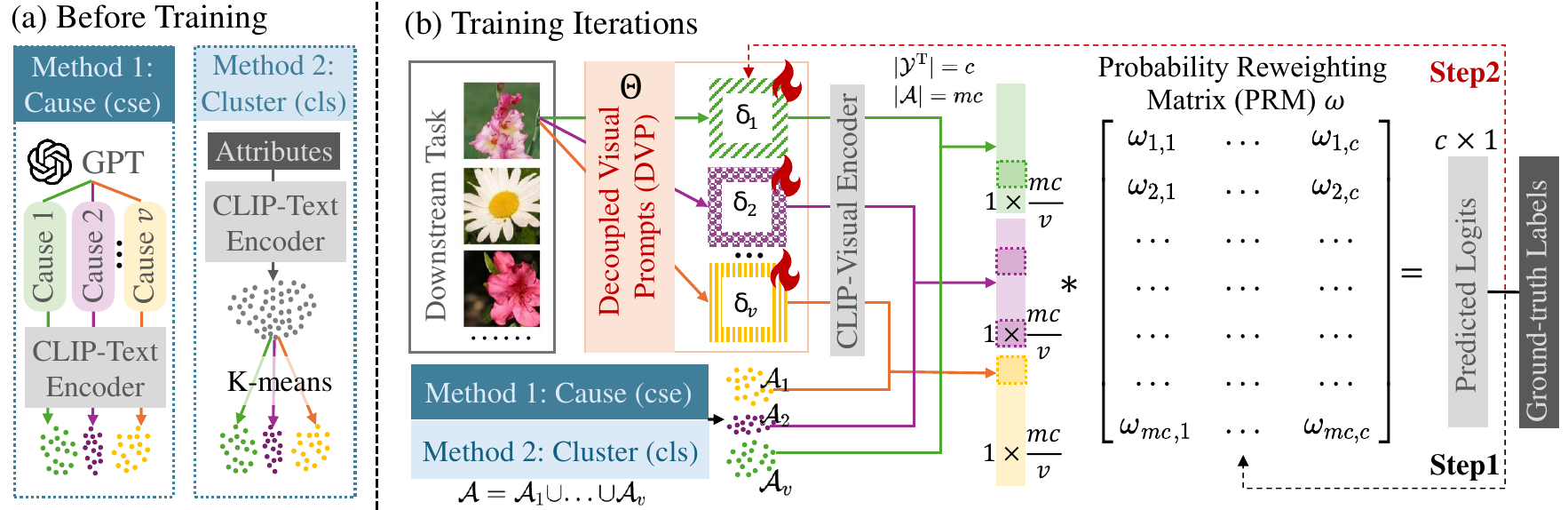}
    \vspace{-0.3cm}
    \caption{Pipeline of DVP. Before training, the description partitions $\mathcal A_1, \ldots, \mathcal A_v$ are determined--either by generating descriptions using an LLM based on different \textit{causes} (DVP-cse) or by \textit{clustering} existing attribute descriptions (DVP-cls). During training, VPs are trained for their corresponding description partitions separately, after being integrated and reweighted by PRM $\omega$. In each iteration, $\omega$ is first estimated, followed by the update of VPs $\Delta=\{\delta_1, \ldots, \delta_v\}$, w.r.t. negative log-likelihood (i.e., cross-entropy loss in classification tasks).}
    \label{fig:pipeline}
\end{figure*}

\textbf{Representing in the Matrix Form.} To rewrite \eqref{eq:vr_logits} in the matrix form,  for certain input image $x^{\rm T}$, we can define a row vector $M_{a}=[\tilde a_1, \tilde a_2, \ldots, \tilde a_{|\mathcal{A}|}]$ being the CLIP output $f_{\rm clip}(f_{\rm in}(x^{\rm T}|\delta), \cdot)$ across different attribute descriptions and another row vector $M_{y}=[\tilde y_1, \tilde y_2, \ldots, \tilde y_{|\mathcal{Y}^{\rm T}|}]$ being the logits $f^{\rm vr}_{\rm logits}(x^{\rm T};\delta, \mathcal{A})$ across different downstream labels. Then they can be related by a reweighting matrix $\omega \in \mathbb{R}^{|\mathcal{A}|\times |\mathcal{Y^{\rm T}}|}$ that $M_{a}\cdot \omega=M_{y}$:
\begin{equation} \label{eq:matrix}
\scalebox{0.9}{$
    \begin{bmatrix}
        \tilde a_1 \\ \tilde a_2\\\ldots\\\tilde a_{|\mathcal A|}
    \end{bmatrix}^{\top}
    \cdot
     \begin{bmatrix}
        \omega_{1, 1} & \dots & \omega_{1, |\mathcal Y^{\rm T}|} \\
        \vdots & \ddots & \vdots \\
        \omega_{|\mathcal A|, 1} & \dots & \omega_{|\mathcal A|, |\mathcal Y^{\rm T}|}
    \end{bmatrix}
    =\begin{bmatrix}
        \tilde y_1 \\ \tilde y_2 \\ \ldots \\ \tilde y_{|\mathcal{Y}^{\rm T}|}
    \end{bmatrix}^{\top}, $}
\end{equation}
where each element $\omega_{p,q}$ in $\omega$ represents the contribution from $\tilde a_p$ (i.e., CLIP output of the $p$-th attribute description) to $\tilde y_q$ (i.e., logit output for the $q$-th label).
Here, \eqref{eq:matrix} implies that ${\tt agg(\cdot)}$ implements a {\em fixed reweighting} scheme.
Concretely, if $\tt{max}(\cdot)$ is used in \eqref{eq:vr_logits}, then $\omega \in \{0, 1\}^{|\mathcal{A}|\times |\mathcal{Y^{\rm T}}|}$. $\omega_{p,q}$ is set to be 1 only when $a_p$ is the maximum CLIP output for the $q$-th class's attribute description, otherwise $\omega_{p,q}=0$. If $\tt{avg}(\cdot)$ is used, then $\omega \in \{0, \frac{1}{m}\}^{|\mathcal{A}|\times |\mathcal{Y^{\rm T}}|}$, and $\omega_{p,q}=1/m$ only when the $p$-th attribute is a description related to the $q$-th class.

Then we can replace ${\tt agg(\cdot)}$ with a fixed $\omega^{\rm fix}$ and rewrite the logits of label $y_q$ in \eqref{eq:vr_logits} as $[f^{\rm vr}_{\rm logits}(x^{\rm T}; \delta, \omega^{\rm fix}, \mathcal{A})]_{y_q}=\sum\nolimits_{a_p \in \gA} \omega^{\rm fix}_{p, q} \cdot f_{\rm clip}(f_{\rm in}(x^{\rm T}|\delta), a_p)$, where $\omega^{\rm fix}_{p, q}$ denotes the entry in $\omega^{\rm fix}$ corresponding to the $p$-th description $a_p$ and the $q$-th class $y_q$. $\omega^{\rm fix}_{p, q}$ is non-zero {\em only} for $a_p$ that contribute to the aggregation for the class $y_q$.
\eqref{eq:vr_prob} under this generalized logit definition now becomes:
\begin{equation}\label{eq:vr_prob_new}
    \scalebox{0.9}{$
    p_{\rm vr}(y^{\rm T}|x^{\rm T};\delta, \omega^{\rm fix}, \mathcal A) = \frac{\text{exp}(\left[f^{\rm vr}_{\rm logits}(x^{\rm T}; \delta, \omega^{\rm fix}, \mathcal{A}) \right]_{y^{\rm T}})}{\sum_{y'\in \mathcal Y^{\rm T}}\text{exp}(\left[f^{\rm vr}_{\rm logits}(x^{\rm T}; \delta, \omega^{\rm fix}, \mathcal{A}) \right]_{y'})}.
    $}
\end{equation}
\textbf{A Decoupling-and-Reweighting Framework. }
Considering \eqref{eq:matrix}, a limited $M_a$ in current VR methods may not capture descriptions of diverse aspects from $\mathcal A$, shown in Figure \ref{fig:motivation}(a), and a trivial $\omega$ may cause the unreasonable description bias in Figure \ref{fig:motivation}(b). 

Therefore, a decoupling-and-reweighting framework like Figure \ref{fig:decouple}(b) can be proposed. Firstly, to enhance learning capacity, a \textit{decoupled} VP set $\Delta=\{\delta_1, \ldots, \delta_v\}$ of size $v$ can be optimized separately with partitions $\mathcal A_1, \ldots, \mathcal A_v$ of $\mathcal A$ to get a better $M_a$. Besides, a more precise Probabilistic Reweighting Matrix $\omega^{\rm PRM} \in [0,1]^{|\mathcal{A}|\times |\mathcal{Y^{\rm T}}|}$ with continuous values can be estimated for \textit{reweighting} the outputs for different attribute descriptions.

\section{Decoupling Visual Prompts}
\label{sec:method}
This section details our DVP method, which implements the decoupling-and-reweighting strategy outlined in Section~\ref{sec:pre}.
Figure \ref{fig:pipeline} illustrates the overall DVP pipeline. 
We begin by describing two ways for partitioning class descriptions (Section \ref{sec:method_cls_cse}), a preparatory step {\em before} training.
We then explain how our Probabilistic Reweighting Matrix (PRM) is estimated (Section \ref{sec:method_reweight}).
We use iterative training, where the VPs set $\Delta$ and the PRM $\omega$ are jointly optimized (see Section \ref{sec:strategy}).
We lastly show DVP can theoretically attain a lower empirical risk than standard input VR (Section \ref{sec:theory}).

\subsection{Partitioning Descriptions}
\label{sec:method_cls_cse}
A core idea of DVP is to train $v$ distinct VPs $\{\delta_1, \ldots, \delta_v\}$, each specialized for a corresponding partition of the full description set $\{\mathcal{A}_1, \ldots, \mathcal{A}_v\}$.
We next present two strategies for partitioning descriptions, namely DVP-cse and DVP-cls.

\textbf{By Semantic Causes (DVP-cse).}
Following \citet{cai2025attribute}, who used LLMs to generate attribute description sets for each class, we leverage LLMs, e.g., GPT series \citep{gpt}, to create semantically coherent description partitions for each cause (i.e., distinguishing aspects for the classification task).
First, we query an LLM to identify $v$ primary ``causes'' (see Appendix~\ref{app:llmcauses}).
For each $i$-th cause, we formulate a specific query, ${\tt cause\_prompt}_i=$``Describe the [$i$-th Cause] of the [Task Info.] image [Class Name]'', which is fed into an LLM to generate a set of descriptions $\mathcal{A}_i(y^{\rm T})$ for each class $y^{\rm T}$. 
The $i$-th global partition is then $\mathcal{A}_i = \bigcup_{y^{\rm T} \in \mathcal{Y}^{\rm T}} \mathcal{A}_i(y^{\rm T})$.
For computational efficiency, these descriptions are pre-converted to text embeddings:
\begin{equation}\label{eq:cse}
\scalebox{0.9}{$
\mathcal{E}_i = \{f_{\rm txt}(a) \mid a \in \mathcal{A}_i\}, \quad \text{for } i=1, \ldots, v$}.
\end{equation}
Each embedding set $\mathcal{E}_i$ (representing $\mathcal{A}_i$) guides the learning of its dedicated prompt $\delta_i$.

\textbf{By Unsupervised Clusters (DVP-cls). }
Alternatively, when using LLMs to generate descriptions is restricted, we can still partition an {\em existing} set of descriptions $\gA$.
We first embed all descriptions into $\mathcal{E} = \{f_{\rm txt}(a) \mid a \in \mathcal{A} \}$.
We then use K-means algorithm~\citep{kmeans} to partition these embeddings into $v$ clusters:
\begin{equation}\label{eq:cls}
\scalebox{0.9}{$
    \{ \mathcal{E}_1, \ldots, \mathcal{E}_v \} = {\tt KMeans}(\mathcal{E}, v)$}.
\end{equation}
This way each cluster $\mathcal{E}_i$ implicitly refers to a description partition $\mathcal{A}_i = \{a | f_{\rm txt}(a) \in \mathcal{E}_i\}$, which is then used to train its respective prompt $\delta_i$.

\subsection{Reweighting Descriptions}
\label{sec:method_reweight}

\textbf{Probabilistic Reweighting Matrix (PRM).}
As discussed in Section \ref{sec:pre}, PRM accounts for quantifying the relationship between description similarity and class logits.
Let $a_p$ be the $p$-the description and $y_q$ be the $q$-th class.
We estimate the conditional probability $p(V=a_p | Y^{\rm T}=y_q)$, denoted by $\omega^{\rm prm}$.
Each entry $\omega^{\rm prm}_{p,q}$ reflects the contribution of description $a_p$ to class $y_q$. 
If $a_p$ is not a designated description for class $y_q$ (i.e., $a_p \notin \mathcal{A}(y_q)$), then $\omega^{\rm prm}_{p,q} = 0$ (Proposition \ref{pro:condi_prob}, Appendix \ref{app:theory}).
Otherwise, $\omega^{\rm prm}_{p,q}$ for $a_p \in \mathcal{A}(y_q)$ can be estimated via MLE from counts on the training set $\mathcal{D}$,
\begin{equation}\label{eq:omega}
\scalebox{0.9}{$
    \begin{aligned}
        \omega^{\rm prm}_{p,q} & = \hat{p}(V=a_p | Y^{\rm T}=y_q, V \in \mathcal{A}(y_q)) \\ & = \frac{\mathcal{N}_{\mathcal{D}}(a_p, y_q)}{\sum_{a' \in \mathcal{A}(y_q)} \mathcal{N}_{\mathcal{D}}(a', y_q)}.
    \end{aligned}$}
\end{equation}
Here, $\mathcal{N}_{\mathcal{D}}(a,y)$ is the co-occurrence count of description $a$ with class $y$ (see Proposition \ref{pro:mle}, Appendix \ref{app:theory} for details).

\textbf{Counting for Few-shot Settings.}
To handle data sparsity in few-shot settings (where $\mathcal{N}_{\mathcal{D}}(a_p, y_q)$ might be sparse, making MLE less effective), we smooth these counts.
Specifically, for each training sample $(x_j^{\rm T}, y_j^{\rm T})$, instead of checking for an exact match with $a_p$, we find the $k$ descriptions in $\mathcal{A}$ most similar to the reprogrammed image $f_{\rm in}(x_j^{\rm T}|\Delta)$. 
Note that only the specific $\delta_i \in \Delta$ is used if a description belongs to $\mathcal{A}_i$. 
Let this set be $\gK(x_j^{\rm T}, k)$. 
The count is then:
\begin{equation}\label{eq:count}
\scalebox{0.9}{$
\mathcal{N}_{\mathcal{D}}(a_p, y_q) = \sum_{j=1}^{N} \mathbf{1}\{y_j^{\rm T}=y_q\} \cdot \mathbf{1}\{a_p \in \gK(x_j^{\rm T}, k)\}$},
\end{equation}
where $\mathbf{1}\{\cdot\}$ is the indicator function and $k$ is a hyper-parameter.
That is, we substitute \eqref{eq:count} into \eqref{eq:omega} to obtain a more robust estimation $\omega^{\rm prm}$ in practice.

\begin{algorithm}[!t] 
    \caption{Pipeline of DVP}
        \begin{algorithmic}[1]
                \STATE {\bfseries Input:} Few-shot training data $\mathcal{D}^{\rm T}=\{(x_j^{\rm T}, y_j^{\rm T})\}^{N}_{j=1}$, description set $\mathcal A$, hyper-parameters $k, v$, epoch number $E$, and pre-trained CLIP model $f_{\rm clip}$ with $f_{\rm txt}$ and $f_{\rm img}$ encoders
                \STATE {\bfseries Output:} DVP set $\Delta=\{\delta_1, \ldots,  \delta_v\}$ and PRM $\omega^{\rm prm}$
                \STATE \textcolor{gray}{\# Before training: divide descriptions}
                \STATE \textbf{Obtain} $\mathcal E=\bigcup_{i=1}^{v} \mathcal E_i$ by causes (method DVP-cse) using \eqref{eq:cse} or clusters (method DVP-cls) using \eqref{eq:cls}
                \STATE \textcolor{gray}{\# Begin training}
                \STATE \textbf{Initialize} $\delta_i \leftarrow {\textbf{0}}$ for $i=1, \ldots, v$
                   \FOR{$e=1$ {\bfseries to} $E$}     
                    \STATE \textcolor{gray}{\# Compute/update CLIP output ($M_a$ in \eqref{eq:matrix})}
                    \STATE \textbf{Compute} $f_{\rm clip}(f_{\rm in}(x^{\rm T}_j|\Delta), a)$ for $j=1, \ldots, N$, $a \in \mathcal A$ using $\mathcal E$
                    \STATE \textcolor{gray}{\# Step 1: update reweighting matrix ($\omega$ in \eqref{eq:matrix})}
                    \STATE $\omega \leftarrow \omega^{\rm prm}$ using \eqref{eq:omega} and \eqref{eq:count}
                    \STATE \textcolor{gray}{\# Compute/update logits output ($M_y$ in \eqref{eq:matrix})}
                    \STATE \textbf{Compute} $\tilde y_q \leftarrow\sum_{a_p \in \mathcal A}\omega_{p,q}f_{\rm clip}(f_{\rm in}(x^{\rm T}_j|\Delta), a_p)$, $q=1, \ldots, |\mathcal Y^{\rm T}|$ for $j=1, \ldots, N$
                    
                    \STATE \textcolor{gray}{\# Step 2: update DVP patterns}
                    \STATE $\delta_i \leftarrow \delta_i^*$ using \eqref{eq:theta} for $i=1, \ldots, v$
                    \ENDFOR
        \end{algorithmic}
    \label{ag:dvp}
    \end{algorithm}
    
\subsection{Iterative Training}
\label{sec:strategy}
Once the partitioning $\gE$ and $\gA$ are obtained through either DVP-cse or DVP-cls, the parameters $\Delta$ and $\omega^{\rm prm}$ are optimized through an alternating optimization procedure (see Algorithm \ref{ag:dvp} for details).
In each training epoch, we first update $\omega^{\rm prm}$ using \eqref{eq:omega} given the current $\Delta$.
Then, we update each $\delta_i \in \Delta$ independently using only its assigned description partition $\gA_i$.
Specifically, for each $\delta_i$, the objective mirrors standard VR's objective~(\eqref{eq:vr_optim}) over $\gD$:
\begin{equation}\label{eq:theta}
\scalebox{0.9}{$
     \min_{\delta_i} \left( - \frac{1}{N} \sum\limits_{j=1}^N \log p_{\rm vr}(y_j^{\rm T}|x_j^{\rm T};\delta_i, \omega^{\rm prm}, \mathcal A_i) \right).$}
\end{equation}
Here, $p_{\rm vr}$ derives from {\em partition-specific logits} (using $\gA_i$, $\delta_i$ and $\omega^{\rm prm}$), contrasting a single-prompt aggregation (with $\gA$, $\delta$, and $\omega^{\rm fix}$) as in \eqref{eq:vr_prob_new}.
This process is applied independently to all $\delta \in \Delta$.
See Appendix~\ref{app:vr_dvp_logits} for details.

\subsection{Justification: Empirical Risk Reduction}
\label{sec:theory}

\begin{table*}[!h]
    \centering
    \caption{Accuracy comparison of different methods trained on 16-shot downstream classification tasks, using ViT-B16-based CLIP as the pretrained model (Mean \% ± Std \%, ours are \textcolor{black}{\colorbox{gray!30}{highlighted}} and the highest is in \textbf{bold}). See Appendix~\ref{app:dvplite} for parameter numbers.}
    \begin{sc}
    \resizebox{\textwidth}{!}{
    \begin{tabular}{c|ccccccccccc|c}
    \toprule
    Method                & Aircraft      & Caltech       & Cars          & DTD           & ESAT          & Flowers       & Food          & Pets          & SUN           & UCF     & Resisc        & Avg.                           \\
    \midrule
     VP    & 32.1\scriptsize{±0.6}         & 93.5\scriptsize{±0.1}          & 65.5\scriptsize{±0.3}          & 61.4\scriptsize{±0.5}          & 91.2\scriptsize{±0.3}          & 82.5\scriptsize{±0.4}          & 82.3\scriptsize{±0.1}          & 91.0\scriptsize{±0.3}          & 65.8\scriptsize{±0.2}          & 73.8\scriptsize{±0.5}                & 79.1\scriptsize{±0.3}          & {74.4}          \\
    AR    & 31.7\scriptsize{±0.3}          & 95.5\scriptsize{±0.2}           & 68.0\scriptsize{±0.3}           & 62.0\scriptsize{±0.1}           & 93.4\scriptsize{±0.1}          & 85.9\scriptsize{±0.7}           & 85.2\scriptsize{±0.1}           & 92.7\scriptsize{±0.1}           & 67.9\scriptsize{±0.3}           & 78.1\scriptsize{±0.2}                 & 81.6\scriptsize{±0.3}           & {76.5}          \\
    AttrVR & {36.6}\scriptsize{±0.3}  & {95.7}\scriptsize{±0.1} & {68.3}\scriptsize{±0.3} & {65.6}\scriptsize{±0.8} & {93.8}\scriptsize{±0.3} & {92.9}\scriptsize{±0.4} & \textbf{85.9}\scriptsize{±0.1} & \textbf{93.3}\scriptsize{±0.0} & {69.6}\scriptsize{±0.1} & {79.0}\scriptsize{±0.6}  & {82.6}\scriptsize{±0.4} & {78.5} \\
    \rowcolor{gray!30}
    DVP-cse &\textbf{40.3}\scriptsize{±0.2} & \textbf{96.2}\scriptsize{±0.1} & \textbf{72.5}\scriptsize{±0.2} & \textbf{66.7}\scriptsize{±0.4} & 93.9\scriptsize{±0.1} & \textbf{95.4}\scriptsize{±0.1} & 85.6\scriptsize{±0.1} & 93.1\scriptsize{±0.0} & \textbf{71.1}\scriptsize{±0.2} & 81.7\scriptsize{±0.3} & \textbf{84.6}\scriptsize{±0.4} & \textbf{80.1}\\
    \rowcolor{gray!30}
    DVP-cls & 38.7\scriptsize{±0.4} & 96.0\scriptsize{±0.0} & 70.8\scriptsize{±0.2} & 65.5\scriptsize{±0.7} & \textbf{94.1}\scriptsize{±0.3} & 95.0\scriptsize{±0.2} & 85.7\scriptsize{±0.0} & \textbf{93.3}\scriptsize{±0.1} & \textbf{71.1}\scriptsize{±0.2} & \textbf{82.0}\scriptsize{±0.1} & 84.4\scriptsize{±0.0} & 79.7\\
    \bottomrule
    \end{tabular}}
    \end{sc}
    \label{tab:mainres}
\end{table*}

We analyze DVP by showing its potential to achieve a lower empirical risk on the training data than standard VR.

\begin{definition}[Empirical Risk]\label{def:jer}
    Consider an input image space $\mathcal X$, a discrete label space $\mathcal Y$, a distribution $\mathcal D'$ defined over $\mathcal X \times\mathcal Y$.
    Let $\gD = \{ (x_j^{\rm T}, y_j^{\rm T}) \}_{j=1}^{N}$ be a training set drawn i.i.d from $\mathcal D'$.
    For a classifier $f:\mathcal X \rightarrow \mathcal Y$ parameterized by $\Theta$, the empirical risk $\hat{R}_{\gD}(\Theta)$ is defined as the average negative log-likelihood as
    \begin{equation*}
    \scalebox{0.9}{$
        \hat{R}_{\mathcal{D}}(\Theta) = -\frac{1}{N} \sum_{j=1}^{N} \log p(y_j | x_j; \Theta). $}
    \end{equation*}
\end{definition}

\textbf{Empirical Risk of Standard VR. } 
According to Definition \ref{def:jer}, for standard VR, the parameters are $\Theta^{\rm vr} = \{ \delta \}$, and the reweighting matrix $\omega^{\rm fix}$ is pre-determined (e.g., by ${\tt max}(\cdot)$ or ${\tt avg}(\cdot)$ aggregation).
Thus, the empirical risk of standard VR methods is 
\begin{align}
\scalebox{0.95}{$
    \hat R_{\mathcal D}^{\rm vr}(\delta, \omega^{\rm fix})=- \frac{1}{N} \sum\limits_{j=1}^N \log p_{\rm vr}(y_j^{\rm T}|x_j^{\rm T};\delta, \omega^{\rm fix}, \mathcal A) ,\notag
    $}
\end{align}
where $p^{\rm vr}$ is derived from logits in \eqref{eq:vr_logits} using $\omega^{\rm fix}$.
$\mathcal A$ is the complete set of descriptions.

\textbf{Empirical Risk of DVP. }
For DVP, whose parameters are $\Theta^{\rm dvp} = \{ \Delta, \omega^{\rm prm} \}$. Given the risk (\eqref{eq:theta}) jointly applied to all $\delta_i \in \Delta$,
The empirical risk of DVP is then
\begin{equation*}
\scalebox{0.9}{$
    \hat R_{\mathcal D}^{\rm dvp} (\Delta, \omega^{\rm prm}) 
    = - \frac{1}{N} \sum\limits_{j=1}^N \sum\limits_{i=1}^v \left[ \log p_{\rm vr}(y_j^{\rm T}|x_j^{\rm T};\delta_i, \omega^{\rm prm}, \mathcal A_i) \right].
$}
\end{equation*}
\begin{remark}
    Reducing all $\delta_i \in \Delta$ to the same $\delta$ and substituting $\omega^{\rm prm}$ with $\omega^{\rm fix}$ reverts DVP to standard VR. 
\end{remark}

Given the likelihood-based form of empirical risk (Definition~\ref{def:jer}) used in our context, we further define {\em optimally achievable empirical risk} to compare the theoretical capability of standard VR and DVP.
\begin{definition}[Optimally Achievable Empirical Risk] \label{def:opt_emp}
    Consider the training set $\gD$ and the classifier $f$ parameterized by $\Theta$ as with Definition~\ref{def:jer}.
    The optimally achievable empirical risk over $\Theta$ is defined as
    \begin{equation*}
    \scalebox{0.9}{$
        \hat{R}^*_{\gD} (\Theta) = \inf_{\Theta} \hat{R}_{\gD} (\Theta) = \inf_{\Theta} \left( -\frac{1}{N} \sum_{j=1}^{N} \log p(y_j | x_j; \Theta) \right)$},
    \end{equation*}
    where the infimum is taken over all possible parameterizations $\Theta$ of $f$, and, if the infimum is attainable, we denote $\Theta^* = \arg \min_{\Theta} \hat{R}_{\gD} (\Theta)$ as the {\em optimal parameterization} uniquely achieving $\hat{R}^*_{\gD} (\Theta)$.
\end{definition}
We now establish two lemmas and a corollary.
\begin{lemma}\label{lemma:decouple}
    Let $\delta^*$ be the optimal VP for standard VR, minimizing $\hat R_{\mathcal D}^{\rm vr}(\delta, \omega^{\rm fix})$ for a fixed reweighting matrix $\omega^{\rm fix}$.
    Let $\Delta^*_{\omega^{\rm fix}}$ be the optimal DVPs that minimizes $\hat R_{\mathcal D}^{\rm dvp}(\Delta, \omega^{\rm fix})$, i.e., DVP using the same fixed $\omega^{\rm fix}$ as standard VR.
    Then, we have $\hat R_{\mathcal D}^{\rm vr}(\delta^*, \omega^{\rm fix}) \geq \hat R_{\mathcal D}^{\rm dvp}(\Delta^*_{\omega^{\rm fix}}, \omega^{\rm fix})$.
\end{lemma}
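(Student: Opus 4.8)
The plan is to recast the claim as a statement about minimizing a \emph{single} objective over nested feasible sets, so that standard VR appears as a constrained instance of DVP. The key structural fact I would exploit is that, in this lemma, the reweighting matrix is held fixed at the common value $\omega^{\rm fix}$ for both methods. With $\omega^{\rm fix}$ frozen, each prompt $\delta_i$ enters $\hat R_{\mathcal D}^{\rm dvp}(\Delta, \omega^{\rm fix})$ only through its own summand, so the objective separates as $\hat R_{\mathcal D}^{\rm dvp}(\Delta, \omega^{\rm fix}) = \sum_{i=1}^v g_i(\delta_i)$, where $g_i(\delta_i) = -\tfrac1N \sum_{j=1}^N \log p_{\rm vr}(y_j^{\rm T}\,|\,x_j^{\rm T}; \delta_i, \omega^{\rm fix}, \mathcal A_i)$ depends on $\delta_i$ alone. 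Consequently the joint minimization decouples into $v$ independent problems, $\min_\Delta \hat R_{\mathcal D}^{\rm dvp}(\Delta, \omega^{\rm fix}) = \sum_{i=1}^v \min_{\delta_i} g_i(\delta_i)$, which is the technical engine of the argument.

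Next I would embed standard VR into DVP as the tied-prompt configuration. Setting $\delta_1 = \cdots = \delta_v = \delta$ yields a feasible point of the DVP parameter space, and by the Remark this configuration (together with $\omega^{\rm fix}$) reverts to standard VR, so that $\hat R_{\mathcal D}^{\rm vr}(\delta, \omega^{\rm fix}) = \hat R_{\mathcal D}^{\rm dvp}(\{\delta,\dots,\delta\}, \omega^{\rm fix})$ for every $\delta$. Because the tied diagonal $\{\delta_1 = \cdots = \delta_v\}$ is contained in the full product space over which $\Delta$ ranges, minimizing over the larger set can only lower the optimum:
\begin{equation*}
\begin{aligned}
\hat R_{\mathcal D}^{\rm dvp}(\Delta^*_{\omega^{\rm fix}}, \omega^{\rm fix}) &= \min_{\Delta} \hat R_{\mathcal D}^{\rm dvp}(\Delta, \omega^{\rm fix}) \le \min_{\delta} \hat R_{\mathcal D}^{\rm dvp}(\{\delta,\dots,\delta\}, \omega^{\rm fix}) \\ &= \min_{\delta} \hat R_{\mathcal D}^{\rm vr}(\delta, \omega^{\rm fix}) = \hat R_{\mathcal D}^{\rm vr}(\delta^*, \omega^{\rm fix}),
\end{aligned}
\end{equation*}
which is exactly the asserted inequality. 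Equivalently, via the decoupled form, per-partition optimality gives $\min_{\delta_i} g_i(\delta_i) \le g_i(\delta^*)$ termwise, and summing identifies $\sum_i g_i(\delta^*)$ with the standard VR risk through the same tied-prompt correspondence.

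I expect the main obstacle to be making this tied-prompt identification genuinely rigorous at the level of \emph{objective values}, not merely at the level of the model class. The delicate point is that the DVP objective aggregates $v$ per-partition softmaxes over $\mathcal Y^{\rm T}$, whereas standard VR applies a single softmax to the full description set $\mathcal A$; the two agree on the diagonal only because $\mathcal A = \bigsqcup_{i} \mathcal A_i$ is disjoint and the logits are additive across partitions under a shared $\delta$ and shared $\omega^{\rm fix}$. I would therefore first establish this additivity, $[f^{\rm vr}_{\rm logits}(x^{\rm T};\delta,\omega^{\rm fix},\mathcal A)]_{y^{\rm T}} = \sum_{i=1}^v [f^{\rm vr}_{\rm logits}(x^{\rm T};\delta,\omega^{\rm fix},\mathcal A_i)]_{y^{\rm T}}$, directly from the definition $[f^{\rm vr}_{\rm logits}]_{y_q} = \sum_{a_p} \omega^{\rm fix}_{p,q}\, f_{\rm clip}(\cdot, a_p)$ together with disjointness of the $\mathcal A_i$, and then pin down precisely in what sense the replicated-$\delta^*$ DVP objective coincides with the standard VR objective. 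Here the \emph{direction} of any residual gap matters: a clean proof needs equality on the diagonal (the sense in which the Remark treats standard VR as the shared-prompt, partition-summed special case), since a mere one-sided bound in the wrong direction would not chain through the nested-minimization step. Once this identification is settled, the separability under fixed $\omega$ and the superset-minimization bound are routine.
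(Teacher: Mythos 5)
Your proposal follows the same route as the paper's own proof (Appendix~\ref{app:lem_decouple}): embed standard VR into DVP as the tied-prompt configuration $\Delta^{\rm tied}=\{\delta,\dots,\delta\}$, use disjointness of the partitions to identify the tied-prompt DVP model with the VR model, and conclude by minimizing over the larger parameter set (the paper phrases this as the function-space inclusion $\mathcal F_{\rm vr}\subseteq\mathcal F_{\rm dvp}$ together with Definition~\ref{def:opt_emp}). In outline the two arguments coincide.

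The obstacle you flagged, however, is genuine, and it is exactly the point the paper glosses over; your worry about the \emph{direction} of the residual gap is well founded. The paper proves the diagonal identification only at the level of logits (the integrated logits of Appendix~\ref{app:vr_dvp_logits}, which are additive across the disjoint $\mathcal A_i$), and its risk comparison is valid if $\hat R^{\rm dvp}_{\mathcal D}$ is read as the negative log-likelihood of a \emph{single} softmax applied to those integrated logits. Under the literal definition in Section~\ref{sec:theory} — a sum over $i$ of $v$ per-partition softmax NLLs — the diagonal identity fails, and it fails in the bad direction: writing $z_{i,y}$ for the partition-$i$ logit of class $y$ under a shared $\delta$ and shared $\omega^{\rm fix}$, positivity of all terms gives
\begin{equation*}
\textstyle\sum_{y'}\prod_{i=1}^{v} e^{z_{i,y'}} \;\le\; \prod_{i=1}^{v}\sum_{y'} e^{z_{i,y'}},
\end{equation*}
hence $\hat R^{\rm vr}_{\mathcal D}(\delta,\omega^{\rm fix}) \le \hat R^{\rm dvp}_{\mathcal D}(\Delta^{\rm tied},\omega^{\rm fix})$, which cannot be chained with $\min_\Delta \hat R^{\rm dvp}_{\mathcal D}(\Delta,\omega^{\rm fix}) \le \hat R^{\rm dvp}_{\mathcal D}(\Delta^{\rm tied},\omega^{\rm fix})$ to yield the lemma. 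The mismatch is not cosmetic: with two classes, one training sample, $v=2$, and achievable per-partition logit gaps bounded by $z$ (attainable simultaneously by one $\delta$), the sum-of-NLLs optimum is $2\log(1+e^{-z})$ while the VR optimum is $\log(1+e^{-2z})<2\log(1+e^{-z})$, so under that reading the claimed inequality can actually be violated. Your proof — like the paper's — therefore closes only once one commits to the integrated-logit reading of $\hat R^{\rm dvp}_{\mathcal D}$, under which the diagonal identification is an exact equality of objective values and the superset-minimization step is immediate. A final small point: the separability you call the ``technical engine'' ($\hat R^{\rm dvp}_{\mathcal D}=\sum_i g_i(\delta_i)$) holds only under the per-partition-sum definition, i.e., precisely the reading for which the lemma breaks; under the integrated-logit reading the $\delta_i$ are coupled through the shared softmax denominator. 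Fortunately separability is never needed — the diagonal-inclusion argument alone suffices.
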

Lemma \ref{lemma:decouple} (proved in Appendix \ref{app:lem_decouple}) demonstrates the advantage of decoupling VPs and training them separately for different description partitions. The decoupling step contributes to the reduction in the optimally achievable empirical risk by improving the learning capacity of the VPs.

\begin{lemma}\label{lemma:reweight}
    For any fixed DVP parameterization $\Delta$, let $\omega^{\rm prm^{*}}$ be the optimal PRM that minimizes $\hat{R}_{\gD}^{\rm dvp}(\Delta, \omega^{\rm prm})$.
    Then $\hat{R}_{\gD}^{\rm dvp}(\Delta, \omega^{\rm fix}) \geq \hat{R}_{\gD}^{\rm dvp}(\Delta, \omega^{\rm prm^{*}})$ holds for any $\omega^{\rm fix}$.
\end{lemma}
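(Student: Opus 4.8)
The plan is to treat this as a ``minimization over a larger feasible set'' argument: I will show that every fixed reweighting matrix $\omega^{\rm fix}$ is itself a feasible PRM, so that the risk-minimizing PRM $\omega^{\rm prm^{*}}$ cannot incur a larger empirical risk than any $\omega^{\rm fix}$.

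First I would make explicit the feasible set over which $\omega^{\rm prm^{*}}$ is optimized. Because each entry $\omega^{\rm prm}_{p,q}$ models the conditional probability $p(V=a_p \mid Y^{\rm T}=y_q)$---vanishing whenever $a_p \notin \mathcal{A}(y_q)$ (Proposition~\ref{pro:condi_prob}) and, for the surviving entries, forming a distribution over $\mathcal{A}(y_q)$---the PRM ranges over
\begin{equation*}
\Omega = \left\{ \omega \in [0,1]^{|\mathcal{A}| \times |\mathcal{Y}^{\rm T}|} \ : \ \omega_{p,q}=0 \text{ if } a_p \notin \mathcal{A}(y_q), \ \textstyle\sum_{a_p \in \mathcal{A}(y_q)} \omega_{p,q}=1 \right\}.
\end{equation*}
By hypothesis $\omega^{\rm prm^{*}} = \argmin_{\omega \in \Omega} \hat R_{\gD}^{\rm dvp}(\Delta, \omega)$ for the given fixed $\Delta$; since $\Omega$ is a compact polytope and the risk is continuous in $\omega$, this minimizer exists.

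Next I would verify that the fixed matrices induced by the standard aggregations lie in $\Omega$. For ${\tt avg}(\cdot)$, $\omega^{\rm fix}_{p,q}=1/m$ on exactly the $m$ descriptions of class $y_q$ and $0$ elsewhere; as $1/m \in [0,1]$, the nonzero pattern respects $a_p \in \mathcal{A}(y_q)$, and each column sums to $m \cdot (1/m)=1$, so $\omega^{\rm fix} \in \Omega$. For ${\tt max}(\cdot)$, each column carries a single entry equal to $1$ on a description in $\mathcal{A}(y_q)$, so again column sums are $1$. Optimality of $\omega^{\rm prm^{*}}$ over $\Omega$ then closes the argument at once:
\begin{equation*}
\hat R_{\gD}^{\rm dvp}(\Delta, \omega^{\rm prm^{*}}) = \min_{\omega \in \Omega} \hat R_{\gD}^{\rm dvp}(\Delta, \omega) \ \le \ \hat R_{\gD}^{\rm dvp}(\Delta, \omega^{\rm fix}).
\end{equation*}

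The main obstacle is pinning down exactly which $\omega^{\rm fix}$ the claim quantifies over. For any single fixed matrix in $\Omega$---in particular the ${\tt avg}(\cdot)$ matrix above---the inclusion $\omega^{\rm fix} \in \Omega$ makes the result immediate. The ${\tt max}(\cdot)$ scheme is more delicate, since its ``winning'' description, and hence the induced reweighting, varies with each input image $x_j^{\rm T}$; strictly it is a sample-dependent object rather than one element of $\Omega$. I would handle this either by reading $\omega^{\rm fix}$ as a fixed (sample-independent) matrix throughout---so the superset argument applies verbatim---or by stating the comparison specifically against the ${\tt avg}(\cdot)$ baseline. Either way, the whole proof reduces to the single observation that $\omega^{\rm prm^{*}}$ is optimized over a set that already contains $\omega^{\rm fix}$, so enlarging the search from a point to $\Omega$ can only lower the risk.
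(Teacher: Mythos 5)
Your proof is correct and takes essentially the same route as the paper: the paper likewise introduces the PRM hypothesis space (its $\Phi^{\rm prm}$, your $\Omega$), shows that both $\omega^{\rm avg}$ and $\omega^{\rm max}$ are special cases of a PRM lying in that space, and concludes immediately from the optimality of ${\omega^{\rm prm}}^*$ over it. The sample-dependence of the ${\tt max}(\cdot)$ scheme that you flag is resolved in the paper exactly as you propose---it defines $\omega^{\rm max}$ as a single fixed matrix by selecting, per class $y_q$, the description maximizing the training-set expectation $\mathbb{E}_{\mathcal D}(\tilde a \mid Y^{\rm T}=y_q)$ of the CLIP output, so the superset argument applies verbatim.
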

Lemma \ref{lemma:reweight} (proved in Appendix \ref{app:lem_reweight}) indicates the effectiveness of reweighting the contribution from descriptions to labels. PRM results in a lower empirical risk compared with the pre-defined and fixed weights.

\begin{corollary}\label{coro}
    Let $\delta^*$ be the optimal VP for standard VR with $\omega^{\rm fix}$.
    Let $\{ \Delta^*, \omega^{\rm prm^*}\}$\footnote{We note that $\Delta^*$ is different from $\Delta^*_{\omega^{\rm fix}}$ in Lemma \ref{lemma:decouple}.} be the optimal parameterization for DVP that minimizes $\hat{R}_{\gD}^{\rm dvp}(\Delta, \omega^{\rm prm})$.
    Then, it holds that $\hat R_{\mathcal D}^{\rm vr}(\delta^*, \omega^{\rm fix}) \geq \hat R_{\mathcal D}^{\rm dvp}(\Delta^*, {\omega^{\rm prm}}^*)$.
\end{corollary}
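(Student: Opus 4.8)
The plan is to prove the displayed inequality purely by transitivity, threading the two preceding lemmas through a single intermediate DVP configuration and then appealing to the definition of the joint minimizer. Concretely, I would start from the left-hand side $\hat R_{\mathcal D}^{\rm vr}(\delta^*, \omega^{\rm fix})$ and build a finite chain of non-increasing quantities that terminates at $\hat R_{\mathcal D}^{\rm dvp}(\Delta^*, {\omega^{\rm prm}}^*)$. The natural waypoint is the pair $(\Delta^*_{\omega^{\rm fix}}, \omega^{\rm fix})$ from Lemma \ref{lemma:decouple}, i.e., the decoupled prompts trained under the \emph{same} fixed reweighting matrix that standard VR uses. Nothing here requires new computation; the work is entirely in correctly composing the two lemmas.

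First I would invoke Lemma \ref{lemma:decouple} directly to obtain $\hat R_{\mathcal D}^{\rm vr}(\delta^*, \omega^{\rm fix}) \geq \hat R_{\mathcal D}^{\rm dvp}(\Delta^*_{\omega^{\rm fix}}, \omega^{\rm fix})$, which is the decoupling step. Second, I would apply Lemma \ref{lemma:reweight} with the fixed parameterization $\Delta = \Delta^*_{\omega^{\rm fix}}$: holding these decoupled prompts fixed and replacing $\omega^{\rm fix}$ by the PRM that is optimal for \emph{that} $\Delta$ can only decrease the DVP risk, giving $\hat R_{\mathcal D}^{\rm dvp}(\Delta^*_{\omega^{\rm fix}}, \omega^{\rm fix}) \geq \hat R_{\mathcal D}^{\rm dvp}(\Delta^*_{\omega^{\rm fix}}, \tilde\omega)$, where $\tilde\omega$ denotes the reweighting matrix minimizing $\hat R_{\mathcal D}^{\rm dvp}(\Delta^*_{\omega^{\rm fix}}, \cdot)$. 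Third, I would close the chain using the definition of the \emph{joint} minimizer $\{\Delta^*, {\omega^{\rm prm}}^*\}$ from Definition \ref{def:opt_emp}: since $(\Delta^*_{\omega^{\rm fix}}, \tilde\omega)$ is merely one feasible point of the joint minimization over $(\Delta, \omega^{\rm prm})$, the joint optimum is no larger, i.e., $\hat R_{\mathcal D}^{\rm dvp}(\Delta^*_{\omega^{\rm fix}}, \tilde\omega) \geq \hat R_{\mathcal D}^{\rm dvp}(\Delta^*, {\omega^{\rm prm}}^*)$. Transitivity across these three inequalities yields the claim.

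The main subtlety is bookkeeping rather than a genuine obstacle, and the footnote already flags it: $\Delta^*_{\omega^{\rm fix}}$ (optimal prompts under the \emph{fixed} matrix, from Lemma \ref{lemma:decouple}) is not the $\Delta^*$ of the joint optimum, and the PRM $\tilde\omega$ optimal for the \emph{fixed} $\Delta^*_{\omega^{\rm fix}}$ is not the jointly optimal ${\omega^{\rm prm}}^*$. The only place correctness could slip is the third step, which implicitly requires that the feasible set of the joint minimization in Definition \ref{def:opt_emp} contains the waypoint $(\Delta^*_{\omega^{\rm fix}}, \tilde\omega)$; this holds because the PRM family $[0,1]^{|\mathcal A|\times|\mathcal Y^{\rm T}|}$ is rich enough to include any such $\tilde\omega$ (indeed it already contains every $\omega^{\rm fix}$). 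Assuming, as the lemmas do, that the relevant infima are attained so that the minimizers exist, each of the three inequalities is an immediate consequence either of a preceding lemma or of the definition of a minimizer, and no further calculation is needed.
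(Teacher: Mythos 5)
Your proposal is correct and follows essentially the same route as the paper's proof: chain Lemma \ref{lemma:decouple} (decoupling under the fixed $\omega^{\rm fix}$), then Lemma \ref{lemma:reweight} applied at $\Delta^*_{\omega^{\rm fix}}$, then the definition of the joint minimizer, and conclude by transitivity. If anything, your bookkeeping is slightly cleaner than the paper's, since you introduce $\tilde\omega$ for the PRM optimal at $\Delta^*_{\omega^{\rm fix}}$, whereas the paper reuses the symbol ${\omega^{\rm prm}}^*$ for both that intermediate matrix and the jointly optimal one.
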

Corollary \ref{coro} (proved in Append \ref{app:coro}) indicates that the DVP framework, by virtue of its increased capacity through decoupled prompts and adaptive reweighting, has the potential to achieve a lower or at least equal empirical risk on the training data compared to standard VR, effectively preventing underfitting in downstream tasks that may exist when applying VR methods.

\section{Experiments}
\label{sec:exp}
\textbf{Baselines and Benchmarks. }
To evaluate DVP, we follow \citet{cai2025attribute} to conduct training and testing on pretrained CLIP models (with various image encoder architectures) for 16-shot downstream classification tasks. Averaged accuracies of three seeds are used for evaluation. These tasks cover various classes and domains, including patterns, actions, scenes, etc, with more information in Appendix \ref{app:data_info}. 

The following three VR baselines are included: (1) VP \citep{bahng2022exploring}, a VR method in which learnable parameters are overlaid on rescaled downstream images; (2) AR \citep{tsai2020transfer,chen2023understanding}, a VR method that pads learnable parameters around the image; and (3) AttrVR \citep{cai2025attribute}, a method that guides the learning of VP patterns through attribute descriptions--all baseline methods learn single and non-decoupled VPs. 

\begin{table*}[h]
\centering
\caption{Ablation studies of DVP-cse and DVP-cls, using ViT-B16-based CLIP as the pretrained model (Mean \% ± Std \%, ours are \textcolor{black}{\colorbox{gray!30}{highlighted}} and the highest is in \textbf{bold}).}
\begin{sc}
\resizebox{\textwidth}{!}{
\begin{tabular}{c|ccccccccccc|c}
\toprule
Method                & Aircraft      & Caltech       & Cars          & DTD           & ESAT          & Flowers       & Food          & Pets          & SUN           & UCF     & Resisc        & Avg.                           \\
\midrule
\rowcolor{gray!30}
DVP-cse &\textbf{40.3}\scriptsize{±0.2} & \textbf{96.2}\scriptsize{±0.1} & \textbf{72.5}\scriptsize{±0.2} & {66.7}\scriptsize{±0.4} & 93.9\scriptsize{±0.1} & \textbf{95.4}\scriptsize{±0.1} & 85.6\scriptsize{±0.1} & 93.1\scriptsize{±0.0} & \textbf{71.1}\scriptsize{±0.2} & \textbf{81.7}\scriptsize{±0.3} & {84.6}\scriptsize{±0.4} & \textbf{80.1}\\
w/o cse  & 36.4\scriptsize{±0.2}          & 95.8\scriptsize{±0.1}          & 69.1\scriptsize{±0.1}          & 65.3\scriptsize{±0.7}          & 94.1\scriptsize{±0.3}          & 93.6\scriptsize{±0.2}          & 85.7\scriptsize{±0.1}          & 93.1\scriptsize{±0.0}         & 70.0\scriptsize{±0.1}          & 80.2\scriptsize{±0.3}          & 82.8\scriptsize{±0.4}          & 78.7          \\
w/o VR   & 27.5\scriptsize{±0.3}          & 93.2\scriptsize{±0.3}          & 63.9\scriptsize{±0.4}          & 57.0\scriptsize{±0.3}          & 45.9\scriptsize{±0.8}          & 83.6\scriptsize{±0.2}          & 83.9\scriptsize{±0.0}          & 91.3\scriptsize{±0.1}          & 62.3\scriptsize{±0.2}          & 68.3\scriptsize{±0.3}          & 60.3\scriptsize{±0.4}          & 67.0          \\
w/o $\omega^{\rm prm}$ & 38.0\scriptsize{±0.1}          & 95.8\scriptsize{±0.1}          & 70.4\scriptsize{±0.4}          & \textbf{67.7}\scriptsize{±0.3} & \textbf{94.8}\scriptsize{±0.1} & 95.0\scriptsize{±0.3}          & \textbf{85.8}\scriptsize{±0.1} & \textbf{93.5}\scriptsize{±0.1} & 67.1\scriptsize{±0.1}          & 80.3\scriptsize{±0.5}          & \textbf{84.9}\scriptsize{±0.1} & 79.4          \\
\midrule
\rowcolor{gray!30}
DVP-cls & \textbf{38.7}\scriptsize{±0.4} & \textbf{96.0}\scriptsize{±0.0} & \textbf{70.8}\scriptsize{±0.2} & \textbf{65.5}\scriptsize{±0.7} & \textbf{94.1}\scriptsize{±0.3} & \textbf{95.0}\scriptsize{±0.2} & 85.7\scriptsize{±0.0} & \textbf{93.3}\scriptsize{±0.1} & \textbf{71.1}\scriptsize{±0.2} & \textbf{82.0}\scriptsize{±0.1} & 84.4\scriptsize{±0.0} & \textbf{79.7}\\
w/o cls  & 36.4\scriptsize{±0.2}          & 95.8\scriptsize{±0.1}          & 69.1\scriptsize{±0.1}          & 65.3\scriptsize{±0.7}          & 94.1\scriptsize{±0.3}          & 93.6\scriptsize{±0.2}          & 85.7\scriptsize{±0.1}          & 93.1\scriptsize{±0.0}          & 70.0\scriptsize{±0.1}          & 80.2\scriptsize{±0.3}          & 82.8\scriptsize{±0.4}          & 78.7          \\
w/o VR   & 26.0\scriptsize{±0.0}          & 93.9\scriptsize{±0.1}          & 63.1\scriptsize{±0.2}          & 55.0\scriptsize{±0.2}          & 52.2\scriptsize{±0.4}          & 83.1\scriptsize{±0.3}          & 84.8\scriptsize{±0.0}          & 91.3\scriptsize{±0.1}          & 64.4\scriptsize{±0.1}          & 70.0\scriptsize{±0.4}          & 60.8\scriptsize{±0.5}          & 67.7          \\
w/o $\omega^{\rm prm}$ & 37.8\scriptsize{±0.4}         & 96.0\scriptsize{±0.2}          & 70.6\scriptsize{±0.1}          & 65.4\scriptsize{±0.7}          & 93.9\scriptsize{±0.2}          & 94.2\scriptsize{±0.3}          & \textbf{85.8}\scriptsize{±0.1} & 93.3\scriptsize{±0.1}          & 69.4\scriptsize{±0.1}          & 80.7\scriptsize{±0.2}          & \textbf{84.6}\scriptsize{±0.1} & 79.2         \\
\bottomrule
\end{tabular}}
\end{sc}
\label{tab:ablation}
\end{table*}

\begin{figure*}[h]
    \centering
    \includegraphics[width=\linewidth]{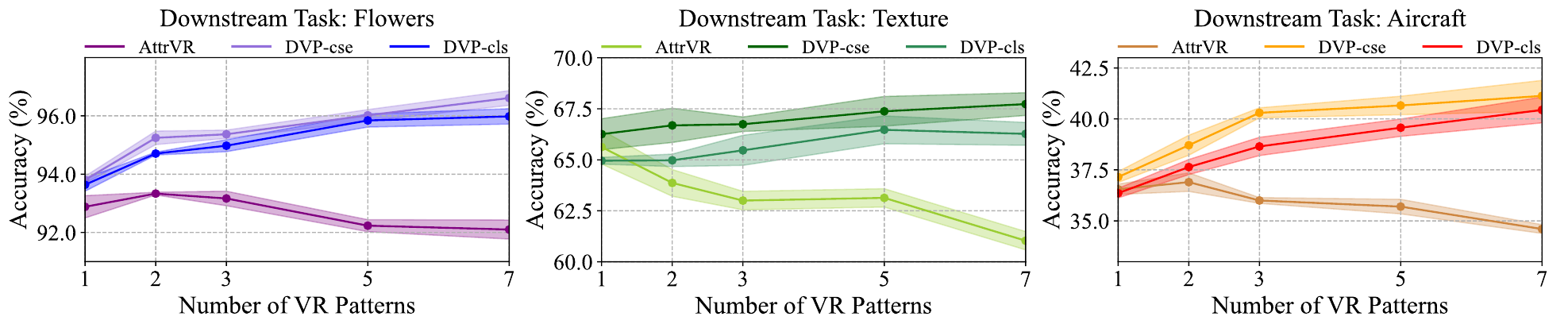}
    \vspace{-0.5cm}
    \caption{Accuracy comparison of different VR methods with the number of VR patterns (i.e., VPs) $v \in \{1, 2, 3, 5, 7\}$. Pre-trained ViT-B16-based CLIP is used. The striped area indicates the error bars.}
    \label{fig:vps}
\end{figure*}

Both DVP-cse and DVP-cls are tested, with the hyper-parameter $k$ chosen to be 3 (see Table \ref{tab:chossing_k}). The number of VPs in DVP-cse is set to be $v=3$, while that in DVP-cls is a hyper-parameter that represents unsupervised clusters and is chosen as shown in Table \ref{tab:chossing_c}. More information about hyper-parameters is in Appendix \ref{app:hyper}. Implementation details are in Appendix \ref{app:implement} and generated causes are in Appendix \ref{app:llmcauses}.

\textbf{Performance Comparison. }
The results on the ViT-16-based CLIP model are shown in Table \ref{tab:mainres}. It can be observed that both DVP-cse and DVP-cls demonstrate strong performance. DVP-cse outperforms the baseline method AttrVR by an average of 1.6\% across 11 datasets, while DVP-cls achieves an average improvement of 1.2\%. Except for the Food dataset, applying DVP-cse or DVP-cls achieves accuracy equal to or higher than AttrVR on all other datasets. 

DVP-cse is designed to train VPs for descriptions grouped by different causes. Consequently, it achieves better results when multiple specific aspects are crucial for classification. For instance, the Cars, Aircraft, and Flowers datasets contain fine-grained classes that vary along multiple cause factors (i.e., color, shape), allowing DVP-cse to achieve improvements of 4.2\%, 3.7\%, and 2.5\% over AttrVR, respectively.

Meanwhile, DVP-cls is a method that learns VPs based on descriptions after unsupervised clustering. It tends to group classes with similar features into a cluster for one single VP. For datasets like UCF (i.e., different actions), or fine-grained datasets like Cars and Aircraft, classes can be clearly divided into subcategories based on similarity. Therefore, DVP-cls achieves better performance than AttrVR, by 3\%, 2.5\%, and 2.1\% on these datasets, respectively.

DVP shows no improvement on the Food dataset. This might be attributed to the primary distinguishing factors for food being smell and taste, while visual information is less useful. As a result, decoupling VPs for better visual learning will not lead to classification improvements.

\textbf{Ablation Studies. }
The ablation studies for DVP-cse and DVP-cls are presented in Table \ref{tab:ablation}. ``W/o cse'' and ``w/o cls'' refer to excluding decoupled VPs, reducing both methods to training a single VP. ``W/o VR'' indicates the zero-shot learning results without training the VR pattern, while ``w/o $\omega^{\rm prm}$'' represents results without PRM.

Without decoupling, a single VP struggles to capture diverse descriptions, leading to unsatisfactory performance, especially on complex fine-grained classification tasks such as Aircraft, Cars, and Flowers. Similarly, omitting the optimization of VP results in poor accuracy on downstream tasks with significant differences from the pre-training domain, such as remote sensing datasets (e.g., EuroSAT, Resisc). When the reweighting matrix $\omega^{\rm prm}$ is not applied, the performance is notably affected in datasets with a large number of categories (e.g., SUN with 397 classes) or complex descriptions (e.g., UCF for action classification). In summary, every component of DVP is indispensable for achieving optimal performance improvements.

\begin{figure*}[h]
    \centering
    \includegraphics[width=\linewidth]{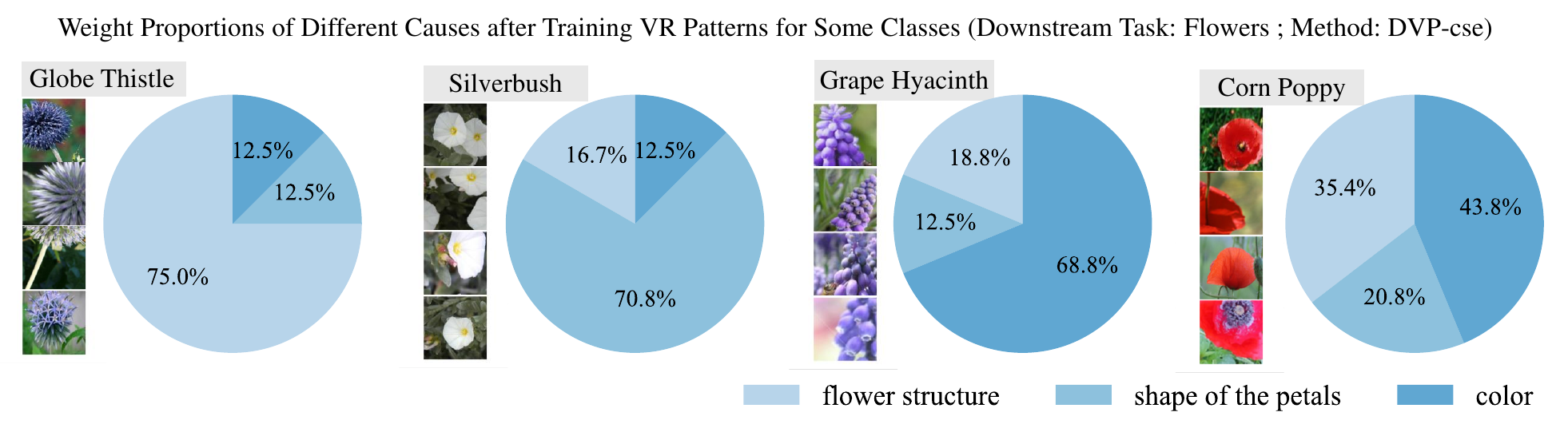}
    \vspace{-0.5cm}
    \caption{Visualization results of DVP-cse, showing weights of causes for classifying certain classes, using ViT-B16-based CLIP.}
    \label{fig:flowers_pie}
\end{figure*}

\begin{figure*}[h]
    \centering
    \includegraphics[width=\linewidth]{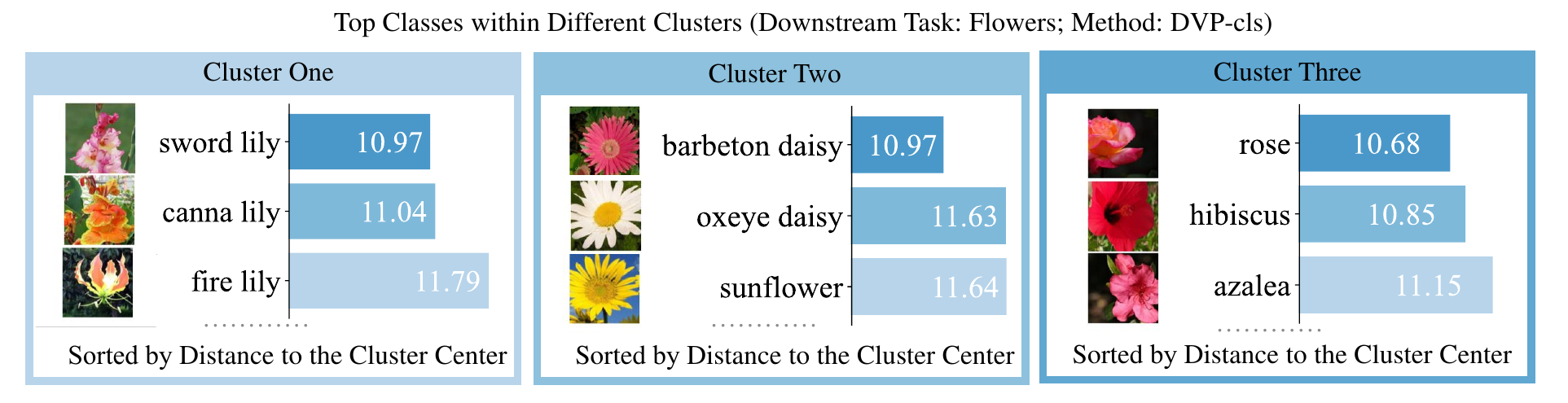} 
    \vspace{-0.5cm}
    \caption{Visualization results of DVP-cls, showing the top 3 classes closest to the description cluster centers, using ViT-B16-based CLIP.}
    \label{fig:flowers_bar}
\end{figure*}

\textbf{Impact of the Number of VPs.}
Figure \ref{fig:vps} illustrates the results of VR methods with various numbers of VPs. Simply increasing the number of VPs without employing the decoupling-and-reweighting framework proposed in this paper may even lead to a decrease in accuracy due to potential overfitting. However, when the DVP approach is used--whether in the form of DVP-cse or DVP-cls--the accuracy steadily improves. This suggests that the performance improvement achieved by DVP is not merely due to the increase in parameters, but is attributable to the benefits of our learning framework.

\textbf{Visualization of Cause Weights (DVP-cse). }
In DVP-cse, since each trained VP corresponds to a specific cause, for a single class, the weights corresponding to individual VPs in the reweight matrix can be summed to calculate the contribution of that cause to the predicted label. Figure \ref{fig:flowers_pie} illustrates the contributions of three causes for four classes in the Flowers dataset. Specifically, the spherical spiny shape of the globe thistle, the trumpet-shaped petals of silverbush, and the bluish-purple petals of grape hyacinth make these three flowers distinct, matching the statistical results--flower structure, petal shape, and color are the primary causes for classifying these flowers, respectively. In contrast, corn poppy is relatively ordinary in various aspects, resulting in similar contributions across different reasons. It can be observed that the reprogramming process determines the priority of causes driving the final class predictions. More results on Texture and Aircraft datasets are detailed in Appendix \ref{app:vis_cse}.

\begin{table}[t]
\centering
\vspace{-0.25cm}
\caption{Average accuracy of different VR methods on 11 datasets, using different backbones as CLIP visual encoders (Mean Accuracy \%, ours are \textcolor{black}{\colorbox{gray!30}{highlighted}} and the highest is in \textbf{bold}, RN stands for ResNet).}
\begin{sc}
\resizebox{0.47\textwidth}{!}{
\begin{tabular}{c|ccc>{\columncolor{gray!30}}c>{\columncolor{gray!30}}c}
\toprule
        & VP   & AR   & AttrVR & DVP-cse       & DVP-cls \\
\midrule
RN50    & 53.5 & 60.4 & 64.6   & \textbf{66.6} & 66.0    \\
RN101   & 57.5 & 62.7 & 67.2   & \textbf{69.6} & 68.8    \\
ViT-B32 & 68.3 & 66.3 & 69.8   & \textbf{71.3} & 71.0    \\
ViT-B16 & 74.4 & 76.5 & 78.5   & \textbf{80.1} & 79.7 \\
\bottomrule
\end{tabular}}
\vspace{-0.4cm}
\end{sc}
\label{tab:backbone}
\end{table}

\textbf{Visualization of Cluster Components (DVP-cls). }
Compared with the VPs in DVP-cse, which have clear correspondences to specific causes, the VPs in DVP-cls tend to correspond to descriptions of classes with commonalities. Figure~\ref{fig:flowers_bar} shows the classes with the closest average distance to the description cluster centers corresponding to different VPs. It can be observed that different types of lilies are grouped into one cluster, while daisies and sunflowers with similar shapes are grouped together, and flowers with similar colors are also grouped together. 
This explains the results in Table \ref{tab:mainres}, where DVP-cse is more suitable for classification tasks with multiple contributing causes, while DVP-cls is better suited for classification tasks with distinct subclass hierarchies. More results are detailed in Appendix \ref{app:vis_cls}.

\textbf{Results on Different Backbones. }
Since VR methods modify only the input images, they are compatible with any model architecture. Table \ref{tab:backbone} presents the classification accuracy averaged across all datasets when using different image encoders (i.e., ResNet or ViT) in CLIP (detailed results are in Appendix \ref{app:backbone}). It can be observed that both DVP-cse and DVP-cls achieve improvements over all baseline methods, regardless of whether smaller (RN50) or larger (ViT-B16) pre-trained models are used. The performance improvement of DVP becomes more pronounced for smaller image encoders, indicating that DVP can compensate for the limitations of pretrained models.

\begin{figure}[t]
    \centering
    \includegraphics[width=\linewidth]{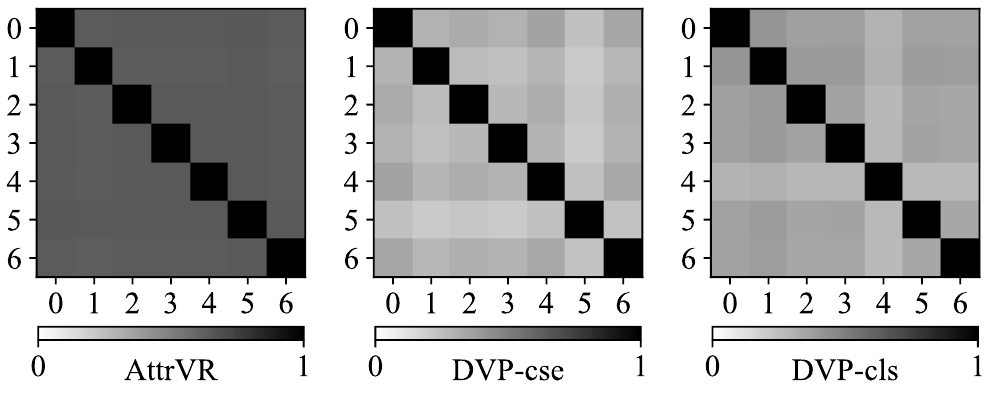}
    \vspace{-0.5cm}
    \caption{Pairwise HSIC between sets of image embeddings applying non-decoupled (i.e., AttrVR) or decoupled VPs (i.e., our DVP). Experiments are done on the Texture task with ViT-16-based CLIP.}
    \label{fig:hsic}
\end{figure}

\textbf{Discussion about Independence.}
We further investigate the independence of different VPs using the \textit{Hilbert-Schmidt Independence Criterion} (HSIC) \citep{hsic}. 
HSIC is a non-parametric test where smaller values indicate greater statistical independence between two random variables. We calculate pairwise HSIC values (with $\gamma=3$) between the embedding distributions of images after adding different VPs. This is done for a baseline approach (i.e., AttrVR, which does not decouple VPs) and the proposed decoupling-based approaches (i.e., our DVP implemented under both DVP-cse and DVP-cls). Experiments are conducted on the Texture task, which features relatively simple images to highlight the differences. Results in Figure~\ref{fig:hsic} show that embeddings obtained by decoupled VPs exhibit lower statistical dependence between different VPs, which might be a factor that contributes to the improved performance.

\textbf{More Experiments.} We conduct several additional experiments to further validate our approach and explore its characteristics.
In Appendix \ref{app:time}, we analyze the time cost of various VR methods and conclude that DVP can achieve performance improvements under the same time cost constraints. In Appendix~\ref{app:causes}, we analyze different causes and investigate what makes a cause effective for VR using DVP-cse. We conclude that the most suitable causes should (1) be related to visual information, and (2) have a causal relationship with accurate classification.
This experiment demonstrates the application of DVP-cse in evaluating the importance of various causes for VR in classification tasks. In Appendix~\ref{app:dvplite}, we discuss the potential parameter overhead associated with DVP, and introduce a variant—DVPlite—which decouples VPs without introducing additional parameters, confirming that our decoupling-and-reweighting strategy improves performance even when no extra parameters are used. In Appendix~\ref{app:VRs}, we demonstrate the applicability of our DVP-PRM framework to different VR methods. In Appendix~\ref{app:shots}, we lastly present results showing how performance varies with different amounts of available training data.

\section{Conclusion}
In this paper, to better align downstream images with descriptions for classification, we proposed a decoupling-and-reweighting framework that learns DVP based on specific causes or unsupervised clusters, combined with a PRM to reweight the descriptions. DVP is shown to be effective both theoretically and empirically. By exploring and analyzing the distinct roles and contributions of individual VPs, we offer new insights into model reprogramming.

\section*{Acknowledgement}
CYC, ZSY, and FL are supported by the Australian Research Council (ARC) with grant number DE240101089, and FL is also supported by ARC with grant number LP240100101, DP230101540 and the NSF\&CSIRO Responsible AI program with grant number 2303037. 
Jianzhong Qi is supported in part by the ARC via Discovery Project DP240101006 and Future Fellowship FT240100170.
This research is also supported by The University of Melbourne’s Research Computing Services and the Petascale Campus Initiative. We sincerely appreciate the time and dedication of the reviewers in carefully reviewing our manuscript.

\section*{Impact Statement}
This paper presents work whose goal is to advance the field of Machine Learning. There are many potential societal consequences of our work, none of which we feel must be specifically highlighted here.

\bibliography{icml2025}
\bibliographystyle{icml2025}

\appendix
\onecolumn
\section{Appendix 1: More Training Information}

\subsection{Dataset Information}
\label{app:data_info}

\begin{table*}[h]
\centering
\caption{Dataset Information}
\vspace{0.15cm}
\resizebox{\textwidth}{!}{
\begin{tabular}{c|ccccccccccc}
\toprule

 & \sc Aircraft      & \sc Caltech       &\sc Cars          &\sc DTD      &\sc ESAT       &\sc Flowers       &\sc Food          &\sc Pets          &\sc SUN           &\sc UCF            &\sc Resisc              \\

 \midrule
 \makecell{\sc Task \\ \sc Info.} & \makecell{aircraft \\ model} & object & \makecell{fine-grained \\ automobile} & texture & \makecell{remote sensing \\ land cover} & flower & food & pet & scene & action &  \makecell{remote \\ sensing scene} \\
 \makecell{\sc Class \\ \sc Num. } & 100      & 100       & 196          & 47      & 10       & 102       & 101          & 37          & 397           & 101            & 45              \\
  \makecell{\sc Batch \\ \sc Size } & 64      & 64       & 64          & 64      & 64       & 64       & 64          & 64          & 64           & 64           & 64              \\
\bottomrule
\end{tabular}}
\label{tab:data_info}
\end{table*}

We follow the prior work \citep{cai2025attribute} to set up our benchmark, employing the same methodology to split the 16-shot training, validation, and test sets. All datasets are publicly available and listed as follows: FGVCAircraft (Aircraft) \citep{aircraft}, Caltech101 (Caltech) \citep{caltech101}, StanfordCars (Cars) \citep{stanfordcars}, Texture (DTD) \citep{dtd}, EuroSAT (ESAT) \citep{eurosat}, Flowers102 (Flowers) \citep{flowers}, Food101 (Food) \citep{food101}, OxfordPets (Pets) \citep{parkhi2012cats}, SUN397 (SUN) \citep{sun397}, UCF101 (UCF) \citep{ucf101}, Resisc45 (Resisc) \citep{resisc}. Detailed task information and the batch size used for training VR are provided in Table \ref{tab:data_info}.

\subsection{Implement Details}
\label{app:implement}
All VR baseline methods are trained with consistent settings: a learning rate of 40, a momentum of 0.9 (SGD optimizer \citep{harold1997stochastic}), and a cosine annealing scheduler \citep{loshchilov2016sgdr}, over 200 epochs. Results are averaged across three random seeds. For method-specific hyper-parameters, we followed \citep{cai2025attribute} by using a VR noise pattern with a frame size of 30 for VP \citep{bahng2022exploring} and a frame size of 16 for AR \citep{chen2023understanding,tsai2020transfer} and AttrVR \citep{cai2025attribute}. To ensure fairness, our DVP utilized the same settings as AttrVR.

For DVP-cls, we use the same descriptions as \citep{cai2025attribute}. For K-means, we use a maximum iteration of 300 and the relative tolerance regarding the Frobenius norm of differences in cluster centers to be 1e-4. For DVP-cse, we use GPT-4o-mini \citep{gpt} to generate descriptions, with the maximum token to be 50, stopped at `.', and the temperature to be 0.99.
We set $m=20$ for each cause number in DVP-cse and the attribute numbers in DVP-cls.
\subsection{Generated Causes}
\label{app:llmcauses}

\begin{table*}[h]
\caption{Top Three Causes Used by DVP-cse for Each Dataset}
\centering
\vspace{0.1cm}
\footnotesize
\begin{tabular}{c|ccc}
\toprule
         & \sc Cause 1                        &\sc Cause 2                &\sc Cause 3                 \\
\midrule
\sc Aircraft & shape and structure            & size and proportions   & surface features        \\
\sc Caltech  & shape                          & color                  & texture                 \\
\sc Cars     & front and rear design          & shape and body style   & wheels and rims         \\
\sc DTD      & pattern                        & coarseness/granularity & contrast                \\
\sc ESAT     & texture                        & spectral information   & spatial characteristics \\
\sc Flowers  & flower structure               & shape of the petals    & color                   \\
\sc Food     & texture                        & shape and composition  & color                   \\
\sc Pets     & fur or coat type               & species and body shape & facial features         \\
\sc SUN      & spatial layout and composition & texture                & color and lighting      \\
\sc UCF      & motion and poses               & contextual elements    & intensity and speed     \\
\sc Resisc   & spectral information           & texture                & spatial characteristics \\
\bottomrule
\end{tabular}
\label{tab:causes}
\end{table*}

\begin{table*}[h]
\caption{Top Seven Causes Used by for Aircraft, DTD and Flowers Dataset}
\centering
\vspace{0.1cm}
\footnotesize
\begin{tabular}{c|ccc}
\toprule
        &\sc Aircraft                   &\sc  DTD                    &\sc  Flowers                         \\
\midrule
\sc Cause 1 & shape and structure        & pattern                & flower structure                \\
\sc Cause 2 & size and proportions       & coarseness/granularity & shape of the petals             \\
\sc Cause 3 & surface features           & contrast               & color                           \\
\sc Cause 4 & engine configuration       & directionality         & size of the Flower              \\
\sc Cause 5 & wing configuration         & regularity             & fragrance                       \\
\sc Cause 6 & tail design                & roughness              & leaf characteristics            \\
\sc Cause 7 & landing gear configuration & entropy                & flowering time and growth habit \\
\bottomrule
\end{tabular}
\label{tab:7causes}
\end{table*}

For DVP-cse, the top three causes for each dataset are obtained by asking an LLM to `List the top three causes for classifying [Task Info.] images', with the specific results presented in Table \ref{tab:causes}. Notably, to investigate the impact of different causes on the results, we generated the top seven causes for the Aircraft, DTD, and Flowers datasets for comparison, as listed in Table \ref{tab:7causes}. The results of DVP-cse in Figure \ref{fig:vps} and Figure \ref{fig:hsic} apply these causes. The effect of different causes will be discussed in Appendix \ref{app:causes}.

\section{Appendix 2: More Theoretical Justification}
\label{app:theory}

\subsection{Detailed Propositions and Proof}
\begin{proposition}\label{pro:condi_prob}
Assuming that $V$ is variable representing the description text and $Y^{\rm T}$ is the variable representing the downstream label, where $V \in \mathcal A, Y^{\rm T} \in \mathcal Y^{\rm T}$, and $\mathcal A(Y^{\rm T}) \subseteq \mathcal A$ is the description subset that corresponds to class $Y^{\rm T}$, it can be obtained that $\forall V \notin \mathcal A(Y^{\rm T})$, $p(V|Y^{\rm T})=0$ and $\forall V \in \mathcal A(Y^{\rm T})$, $p(V|Y^{\rm T})=p(V|Y^{\rm T}, V \in \mathcal A(Y^{\rm T}))$.
\end{proposition}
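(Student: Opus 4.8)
The plan is to reduce both claims to a single structural fact that the setup encodes: since the per-class description sets $\{\mathcal{A}(y^{\rm T})\}$ are disjoint with union $\mathcal{A}$, and each description $a \in \mathcal{A}(y^{\rm T})$ is generated to describe precisely the class $y^{\rm T}$, the conditional law of $V$ given $Y^{\rm T} = y_q$ is supported entirely on $\mathcal{A}(y_q)$. I would begin by stating this support property explicitly as the working assumption, phrasing it as $p(V = a_p, Y^{\rm T} = y_q) = 0$ whenever $a_p \notin \mathcal{A}(y_q)$, and tying it to the class-by-class description-generation procedure of Section~\ref{sec:method_cls_cse}.

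Granting this, the first claim is immediate: for $a_p \notin \mathcal{A}(y_q)$ I would write $p(V = a_p \mid Y^{\rm T} = y_q) = p(a_p, y_q)/p(y_q)$ and observe that the numerator vanishes (with $p(y_q) > 0$ for any realized class), giving $p(V \mid Y^{\rm T}) = 0$ off the support.

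For the second claim I would invoke the definition of conditional probability together with the observation that $\{V \in \mathcal{A}(y_q)\}$ is a probability-one event under $p(\cdot \mid Y^{\rm T} = y_q)$ --- which is exactly the first claim summed over the complement. Writing
\[
p(V = a_p \mid Y^{\rm T} = y_q, V \in \mathcal{A}(y_q)) = \frac{p(\{V = a_p\} \cap \{V \in \mathcal{A}(y_q)\} \mid Y^{\rm T} = y_q)}{p(V \in \mathcal{A}(y_q) \mid Y^{\rm T} = y_q)},
\]
I would note that for $a_p \in \mathcal{A}(y_q)$ the event $\{V = a_p\}$ is contained in $\{V \in \mathcal{A}(y_q)\}$, collapsing the numerator to $p(V = a_p \mid Y^{\rm T} = y_q)$, while the denominator equals one. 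Cancelling then yields the claimed equality.

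The main obstacle --- and really the only step requiring care --- is justifying the off-support vanishing $p(a_p, y_q) = 0$, which does \emph{not} follow from disjointness alone: disjointness only guarantees that each description is assigned to a unique class, whereas the conclusion additionally needs that the joint distribution over $(V, Y^{\rm T})$ places no mass on mismatched description--class pairs. I would therefore foreground this as an explicit modeling assumption, after which both parts are routine conditional-probability manipulations.
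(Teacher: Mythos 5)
Your proof is correct, and it rests on the same underlying fact as the paper's: once the conditional law of $V$ given $Y^{\rm T}=y_q$ is supported on $\mathcal A(y_q)$, further conditioning on the event $\{V \in \mathcal A(y_q)\}$ changes nothing. The mechanics differ, though, in a way worth noting. The paper expands $p(V|Y^{\rm T})$ forward via the law of total probability over the partition $\{V \in \mathcal A(Y^{\rm T})\}$, $\{V \notin \mathcal A(Y^{\rm T})\}$, and then argues the weights are $1$ and $0$; this forces it to write down $p(V|Y^{\rm T}, V \notin \mathcal A(Y^{\rm T}))$, a quantity conditioned on a probability-zero event, which is strictly speaking undefined (the paper multiplies it by zero and moves on). Your route---expanding $p(V|Y^{\rm T}, V \in \mathcal A(y_q))$ by the ratio definition of conditioning, collapsing the numerator via the containment $\{V = a_p\} \subseteq \{V \in \mathcal A(y_q)\}$, and noting the denominator equals one---never conditions on a null event, so it is marginally cleaner. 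You also make explicit what the paper only gestures at (``$V$ is the description irrelevant to $Y^{\rm T}$''): the off-support vanishing $p(V=a_p, Y^{\rm T}=y_q)=0$ for $a_p \notin \mathcal A(y_q)$ is a modeling assumption about how descriptions are generated per class, not a consequence of the disjointness of the sets $\mathcal A(y)$, and it is the real content of the first claim. Foregrounding that assumption is the right call; given it, both your argument and the paper's are routine, and they buy the same conclusion.
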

\begin{proof}
\label{app:theory_prop1}
By the law of total probability \citep{total_prob}, we know 
\begin{align}\label{eq:all_prob}
     p(V|Y^{\rm T})= & p(V|Y^{\rm T}, V \in \mathcal A(Y^{\rm T}))\cdot p( V \in \mathcal A(Y^{\rm T})|Y^{\rm T}) \notag\\
     + & p(V|Y^{\rm T}, V \notin \mathcal A(Y^{\rm T}))\cdot p( V \notin \mathcal A(Y^{\rm T})|Y^{\rm T}).
\end{align}

For all attribute descriptions that not correspond to label $y^{\rm T}$, (i.e, $\forall V \notin \mathcal A (y^{\rm T})$), we have $p(V|Y^{\rm T})=0$, as $V$ is the description irrelevant to $Y^{\rm T}$. 

Besides, for attribute descriptions that correspond to $y^{\rm T}$, we have
\begin{align}
    & \forall V \in \mathcal A (y^{\rm T}), p( V \in \mathcal A(Y^{\rm T})|Y^{\rm T})=1, \notag \\
    & \forall V \in \mathcal A (y^{\rm T}), p( V \notin \mathcal A(Y^{\rm T})|Y^{\rm T})=0,\notag \\
\end{align}
By substituting the above two equation into \eqref{eq:all_prob}, it can be obtained that $\forall V \in \mathcal A (y^{\rm T})$,
\begin{align}
     p(V|Y^{\rm T})& = p(V|Y^{\rm T}, V \in \mathcal A(Y^{\rm T})) \times 1 \notag \\
     & + p(V|Y^{\rm T}, V \notin \mathcal A(Y^{\rm T})) \times 0 \notag \\
     & = p(V|Y^{\rm T}, V \in \mathcal A(Y^{\rm T})). \notag
\end{align}
Conclusively, we get:
\begin{align}
p(V|Y^{\rm T})=\left\{\begin{matrix}
p(V|Y^{\rm T}, V \in \mathcal A(Y^{\rm T})) & V \in \mathcal A(Y^{\rm T})\\
0 & V \notin \mathcal A(Y^{\rm T})
\end{matrix}\right. \notag
\end{align}
as is stated in Proposition \ref{pro:condi_prob}.
\end{proof}

\begin{proposition}\label{pro:mle}
Assuming $N(\cdot, \cdot)$ is the counting function that returns the frequency of two items occurring together, $V \in \mathcal A, Y^{\rm T} \in \mathcal Y^{\rm T}$ are description and label variables respectively, $\mathcal A(Y^{\rm T}) \subseteq \mathcal A$ is the description subset that corresponds to class $Y^{\rm T}$ and $a, y^{\rm T}$ are values, then through MLE \citep{mle}, the conditional probability $p(V=a|Y^{\rm T}=y^{\rm T}, V \in \mathcal A(y^{\rm T}))$ can be estimated on training set $\mathcal D$ as \begin{equation}
    \hat p(V=a|Y^{\rm T}=y^{\rm T}, V \in \mathcal A(y^{\rm T}))=\frac{\mathcal N_{\mathcal D}(V=a,Y^{\rm T}=y^{\rm T})}{\sum_{a' \in \mathcal A(y^{\rm T})} \mathcal N_{\mathcal D}(V=a',Y^{\rm T}=y^{\rm T})}. \notag
\end{equation}
\end{proposition}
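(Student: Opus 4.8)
The plan is to recognize Proposition \ref{pro:mle} as the standard maximum-likelihood estimate for a categorical distribution and to derive it by constrained optimization. Fix a class $y^{\rm T} \in \mathcal Y^{\rm T}$ and condition throughout on the events $Y^{\rm T}=y^{\rm T}$ and $V \in \mathcal A(y^{\rm T})$. Under this conditioning, $V$ is a discrete random variable supported on the finite set $\mathcal A(y^{\rm T})$, hence governed by a categorical distribution with parameters $\theta_a := p(V=a \mid Y^{\rm T}=y^{\rm T}, V \in \mathcal A(y^{\rm T}))$ for $a \in \mathcal A(y^{\rm T})$, subject to $\theta_a \ge 0$ and $\sum_{a \in \mathcal A(y^{\rm T})} \theta_a = 1$. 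The goal is to show that the MLE of $\theta_a$ over $\mathcal D$ equals the empirical frequency in the claimed form.

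First I would write the likelihood. Using the i.i.d.\ assumption on $\mathcal D$, each recorded co-occurrence of a description $a$ with the class $y^{\rm T}$ is treated as an independent draw from the categorical distribution, so the likelihood factorizes as $L(\theta) = \prod_{a \in \mathcal A(y^{\rm T})} \theta_a^{\mathcal N_{\mathcal D}(V=a,\, Y^{\rm T}=y^{\rm T})}$, where the exponent is exactly the count returned by $\mathcal N_{\mathcal D}(\cdot,\cdot)$. Taking logarithms gives $\ell(\theta) = \sum_{a \in \mathcal A(y^{\rm T})} \mathcal N_{\mathcal D}(V=a,\, Y^{\rm T}=y^{\rm T}) \log \theta_a$, which is concave in $\theta$ on the probability simplex.

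Next I would maximize $\ell(\theta)$ subject to the normalization constraint via a Lagrange multiplier $\lambda$, forming $\ell(\theta) + \lambda \bigl(1 - \sum_{a} \theta_a\bigr)$. Setting the partial derivative with respect to each $\theta_a$ to zero yields $\mathcal N_{\mathcal D}(V=a,\, Y^{\rm T}=y^{\rm T})/\theta_a = \lambda$, hence $\theta_a = \mathcal N_{\mathcal D}(V=a,\, Y^{\rm T}=y^{\rm T})/\lambda$. Summing over $a \in \mathcal A(y^{\rm T})$ and invoking $\sum_a \theta_a = 1$ identifies $\lambda = \sum_{a' \in \mathcal A(y^{\rm T})} \mathcal N_{\mathcal D}(V=a',\, Y^{\rm T}=y^{\rm T})$, the total count of descriptions co-occurring with $y^{\rm T}$. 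Substituting back gives the claimed estimator, and concavity confirms it is the global maximizer.

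I do not expect a genuine obstacle in the optimization itself, which is textbook. The only point needing care is conceptual bookkeeping: justifying that the counts $\mathcal N_{\mathcal D}(V=a,\, Y^{\rm T}=y^{\rm T})$ are valid sufficient statistics for this categorical model. In the few-shot regime the paper replaces exact description matches by the top-$k$ nearest descriptions to each reprogrammed image (\eqref{eq:count}); I would remark that this merely redefines what constitutes a co-occurrence, leaving the multinomial likelihood structure---and hence the MLE derivation---unchanged. Combining with Proposition \ref{pro:condi_prob}, which forces $\omega^{\rm prm}_{p,q}=0$ outside $\mathcal A(y_q)$, then recovers the full PRM entry in \eqref{eq:omega}.
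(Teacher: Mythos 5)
Your proposal is correct and follows essentially the same route as the paper's proof: both recognize the estimator as the maximum-likelihood solution for a categorical distribution over co-occurrence counts and derive it via a Lagrange multiplier on the normalization constraint. The only cosmetic difference is that you parameterize the membership-conditioned probabilities $\theta_a = p(V=a \mid Y^{\rm T}=y^{\rm T}, V \in \mathcal A(y^{\rm T}))$ directly, whereas the paper first performs MLE on $p(V=a \mid Y^{\rm T}=y^{\rm T})$ and then normalizes through the conditional-probability ratio established alongside Proposition \ref{pro:condi_prob}; the two are equivalent since the counts vanish outside $\mathcal A(y^{\rm T})$.
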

\begin{proof}
According to the definition of conditional probability, we can rewrite $p(V=a|Y^{\rm T}=y^{\rm T}, a \in \mathcal A(y^{\rm T}))$ as:
\begin{align}
    p(V=a|Y^{\rm T} =y^{\rm T}, V \in \mathcal A(y^{\rm T})) & =\frac{p(V=a,Y^{\rm T}=y^{\rm T}, V \in \mathcal A(y^{\rm T}))}{p(Y^{\rm T}=y^{\rm T}, V \in \mathcal A(y^{\rm T}))} \notag \\
    & = \frac{p(V=a,a \in \mathcal A(y^{\rm T})|Y^{\rm T}=y^{\rm T})}{p(V \in \mathcal A(y^{\rm T})|Y^{\rm T}=y^{\rm T})}.\notag
\end{align}

Therefore, for $\forall a \in \mathcal A(y^{\rm T})$:
\begin{align}\label{eq:app_est_prm}
    p(V=a|Y^{\rm T} =y^{\rm T}, V \in \mathcal A(y^{\rm T}))=\frac{p(V=a|Y^{\rm T} =y^{\rm T})}{\sum_{a' \in \mathcal A(y^{\rm T})}p(V=a'|Y^{\rm T} =y^{\rm T})},
\end{align}
where $p(V=a|Y^{\rm T} =y^{\rm T})$ is a value in $\omega$ to be estimated. Given the downstream training set $\mathcal D=\{ (x^{\rm T}_{i}, y^{\rm T}_{i}) \}_{i=1}^{N}$, the VR method and description set $\mathcal A$, the maximum likelihood $\mathcal L(\omega)$ is formulated to be
\begin{align}\label{eq:app_mle}
    \mathcal L(\omega)=\prod_{i=1}^N p(V=a_i|Y^{\rm T} =y_i^{\rm T}),
\end{align}
where $a_i$ is the most similar description for sample $x_i^{\rm T}$, given the input DVP pattern $\Delta$ and description set $\mathcal A$. For simplicity, we use $w_{p,q} \triangleq p(a_p|y_q)$ to represent $p(V=a_p|Y^{\rm T} =y_q)$. Thus, according to \eqref{eq:app_mle} any single item $\omega_{p,q}$ in $\omega$ will have the maximum likelihood as
\begin{align}
    \mathcal L(\omega_{p,q})=\prod_{Y^{\rm T} \in \mathcal Y^{\rm T}} \prod_{V \in \mathcal A} p(V=a_p|,Y^{\rm T}=y_q)^{\mathcal N_{\mathcal D}(a_p, y_q)}, \notag
\end{align}
where $\mathcal N_{\mathcal D}(a_p, y_q)$ is the counting function that returns the frequency of matched $a_p$ and $y_q$ pairs. Taking the logarithm gives:
\begin{align}
    l(\omega_{p,q})=\sum_{Y^{\rm T} \in \mathcal Y^{\rm T}} \sum_{V \in \mathcal A} \mathcal N_{\mathcal D}(a_p, y_q)\cdot \log p(V=a_p|,Y^{\rm T}=y_q), s.t. \sum_{a' \in \mathcal A} p(V=a'|,Y^{\rm T}=y_q)=1.\notag
\end{align}
Then we introduce Lagrange multipliers $\lambda_{q}$ for constrained optimization:
\begin{align}
    l'(\omega_{p,q}, \lambda)=l(\omega_{p,q})+ \lambda_q(1-\sum_{a' \in \mathcal A} p(V=a'|Y^{\rm T}=y_q)). \notag
\end{align}
Taking partial derivatives on both sides of the equation, we get
\begin{align}
    \frac{\partial l'}{\partial p(V=a_p|Y^{\rm T}=y_q)}=\frac{\mathcal N_{\mathcal D}(a_p, y_q)}{p(V=a_p|Y^{\rm T}=y_q)}-\lambda_q=0.\notag
\end{align}
Therefore, $\hat p(V=a_p|,Y^{\rm T}=y_q)=\frac{\mathcal N_{\mathcal D}(a_p, y_q)}{\lambda_q}$. Since $\sum_{a_p \in \mathcal A} p(V=a_p|,Y^{\rm T}=y_q)=1$, we get $\sum_{a_p \in \mathcal A} \frac{\mathcal N_{\mathcal D}(a_p, y_q)}{\lambda_q}=1$ and thus $\lambda_q=\mathcal N_{\mathcal D}(y_q)$.
Substituting back into \eqref{eq:app_est_prm}, we get the final prediction:
\begin{align}
    \hat p(V=a_p|Y^{\rm T} =y_q) =\frac{\mathcal N_{\mathcal D}(a_p, y_q)}{\sum_{a' \in \mathcal A(y_q)} \mathcal N_{\mathcal D}(a', y_q)}.\notag
\end{align}
\end{proof}

\subsection{Relationship between $p_{\rm vr}$ and $p_{\rm dvp}$}
\label{app:vr_dvp_logits}

In DVP, since $\gA = \bigcup_{i=1}^{v}$ and $\gA(y_i^{\rm T}) \cap \gA(y_j^{\rm T}) = \varnothing$ for $i \neq j$, the total similarity between $x^{\rm T}$ and $\gA$ decomposes additively across partitions.
Moreover, each $\delta_i$ optimizes only for $\gA_i$, making $\delta_i$ exclusive prompt responsible for aligning $x^{\rm T}$ with descriptions in $\gA_i$.

The integrated logits of all prompts $\Delta$ can thus be expressed as a summation over all individual $\delta_i$ with respect to the designated description partition $\gA_i$.
\begin{equation*}\label{eq:dvp_logits}
    \begin{aligned}
        \underbrace{\left[ f_{\rm logits}^{\rm dvp}(x^{\rm T}; \Delta, \omega^{\rm prm}) \right]_{y_q^{\rm T}}}_{\text{integrated logit}}
        & = \sum_{i=1}^{v} \left[ f_{\rm logits}^{\delta_i}(x^{\rm T}; \delta_i, \omega^{\rm prm}) \right]_{y_q^{\rm T}} \\
        & = \sum_{i=1}^{v} \underbrace{\sum_{a_p \in \gA_i} \left( f_{\rm clip}(f_{\rm in}(x^{\rm T} | \delta_i), a_p) \right)_{y_q^{\rm T}} \cdot \omega^{\rm prm}_{p,q}}_{\text{partition-specific logit contribution}}.
    \end{aligned}
\end{equation*}

Given the form of \eqref{eq:dvp_logits}, it is easy to find that, by reducing $\Delta = \{ \delta_1, \dots, \delta_v \}$ to the same $\delta$, as well as substituting $\omega^{\rm prm}$ with $\omega^{\rm fix}$ (i.e., ${\tt agg}(\cdot)$), DVP reverts back to standard VR.

\subsection{Proof of Lemma \ref{lemma:decouple}}
\label{app:lem_decouple}
\begin{proof}
The standard VR method, parameterized by a single prompt $\delta$, can be viewed as a special case of the DVP model (both using a fixed $\omega^{\rm fix}$) where all decouped prompts are constrained to be identical, i.e., $\delta_1 = \delta_2 = \ldots, \delta_v = \delta$.
Specifically, let $\Delta^{\rm tied} = \{ \delta, \ldots, \delta \}$, then the DVP logits using $\omega^{\rm fix}$ become:
\begin{equation*}
    \left[f_{\rm logits}^{\rm dvp}(x^{\rm T}; \Delta^{\rm tied}, \omega^{\rm fix}) \right]_{Y^{\rm T} = y_q^{\rm T}} = \sum_{i=1}^{v} \sum_{a_p \in \mathcal{A}_s} \left( f_{\rm clip}(f_{\rm in}(x^{\rm T}|\delta), a_p) \cdot \omega^{\rm fix}_{p,q} \right).
\end{equation*}
Since the partitions $\gA_i$ are {\em disjoint} and their union is $\gA$, this sum is equivalent to $\sum_{a_p \in \gA} \left( f_{\rm clip} ( f_{\rm in} (x^{\rm T} | \delta), a_p) \cdot \omega_{p, q}^{\rm fix} \right)$.
This expression, when aggregated according to the rules defined in $\omega^{\rm fix}$, e.g., ${\tt max}(\cdot)$ or ${\tt avg}(\cdot)$ over descriptions for class $y_q^{\rm T}$, yields the logits of the standard VR model $f_{\rm logits}^{\rm dvp}(x^{\rm T}; \delta, \gA)$.
Therefore, {\color{violet} the set of functions representable by standard VR (parameterized by $\delta$) is a subset of those representable by DVP (parameterized by $\Delta$)} when both use the same $\omega^{\rm fix}$.

Let $\gF_{\rm vr}$ be the function space spanned by standard VR prompts and $\gF_{\rm dvp}$ be that for DVP prompts (both with fixed $\omega^{\rm fix}$),
then $\gF_{\rm vr} \subseteq \gF_{\rm dvp}$.
Given that $\delta^*$ and $\Delta^*_{\omega^{\rm fix}}$ represent the {\em optimally achievable parameterization} within $\gF_{\rm vr}$ and $\gF_{\rm dvp}$, respectively, it is ensured that $\hat R_{\mathcal D}^{\rm vr}(\delta^*, \omega^{\rm fix}) \geq \hat R_{\mathcal D}^{\rm dvp}(\Delta^*_{\omega^{\rm fix}}, \omega^{\rm fix})$ because $\min_{f \in \gF_A} R^{*}_{\gD} (f) \leq \min_{f^{\prime} \in \gF_B} R^{*}_{\gD} (f^\prime), {\rm s.t.} \; \gF_{B} \subseteq \gF_{A}$ (based on Definition~\ref{def:opt_emp} and the set inclution relationship).
This complete the proof.
\end{proof}

\subsection{Proof of Lemma \ref{lemma:reweight}}
\label{app:lem_reweight}
\begin{proof}
Assuming the hypothesis space of $\omega^{\rm prm}$ to be $\Phi^ {\rm prm}$, and the optimized $\omega^{\rm prm} \in \Phi^ {\rm prm}$ to be ${\omega^{\rm prm}}^*$, then for a fixed $\Delta$, we have:
\begin{align}\label{eq:app_prm_space}
    \forall \omega \in \Phi^ {\rm prm}: \hat R_{\mathcal D}^{\rm dvp}(\Delta, \omega) \geq \hat R_{\mathcal D}^{\rm dvp}(\Delta, {\omega^{\rm prm}}^*).
\end{align}
We define the $\omega$ in standard VR method that use $\texttt{max}(\cdot)$ in \eqref{eq:vr_logits} to be $\omega^{\rm max}$, where $\omega^{\rm max} \in \{0, 1\}^{|\mathcal{A}|\times |\mathcal{Y^{\rm T}}|}$ as discussed in Section \ref{sec:pre}. $\omega^{\rm max}_{p,q}$ is set to be 1 only when $a_p$ is the $q$-th class's attribute description with the maximum CLIP output, otherwise $\omega^{\rm max}_{p,q}=0$. 

Besides, when $\texttt{avg}(\cdot)$ is used in \eqref{eq:vr_logits}, we define  $\omega=\omega^{\rm avg}$.  Then $\omega^{\rm avg} \in \{0, \frac{1}{m}\}^{|\mathcal{A}|\times |\mathcal{Y^{\rm T}}|}$, where $\omega^{\rm avg}_{p,q}=\frac{1}{m}$ only when the $p$-th attribute is the description of the $q$-th class. As a result, $\omega^{\rm fix} \in \{\omega^{\rm max},\omega^{\rm avg}\}$ for standard VR methods.

For $\omega^{\rm max}$, we have:
\begin{align}
    \omega^{\rm max}_{p,q}=\hat p(V=a_p|Y^{\rm T}=y_q)=\left\{\begin{matrix}
  1& a_p=\arg\max_{a' \in \mathcal A(y_q)} \mathbb{E}_{\mathcal D}(\tilde a|Y^{\rm T}=y_q) \notag\\
  0& \text{otherwise}
\end{matrix}\right.,
\end{align}
where $\mathbb{E}_{\mathcal D}(\tilde a|Y^{\rm T}=y_q)$ is the expectation of CLIP output $\tilde a$ corresponding to the description $a'$, for samples in the downstream training set $\mathcal D$ whose group truth labels are $y_q$.

For $\omega^{\rm avg}$, we have:
\begin{align}
    \omega^{\rm avg}_{p,q}=\hat p(V=a_p|Y^{\rm T}=y_q)=\left\{\begin{matrix}
  \frac{1}{m}& a_p \in \mathcal A(y_q)  \notag\\
  0& \text{otherwise}
\end{matrix}\right.,
\end{align}
where $m=|\mathcal A(y_q)|$ is the number of descriptions for each class $y_q$. It can be observed that both $\texttt{max}(\cdot)$ and $\texttt{avg}(\cdot)$ are special cases of PRM, which is consistent with \eqref{eq:matrix} in Section \ref{sec:pre}. Therefore, $\omega^{\rm max}$ and $\omega^{\rm avg}$ exist within the hypothesis space of PRM, meaning:
\begin{align}
    \omega^{\rm max} \in  \Phi^{\rm prm}, \text{  and   }\omega^{\rm avg} \in  \Phi^{\rm prm}.\notag
\end{align}
From \eqref{eq:app_prm_space}, we have:
\begin{align}
    \hat R_{\mathcal D}^{\rm dvp}(\Delta, \omega^{\rm max}) \geq \hat R_{\mathcal D}^{\rm dvp}(\Delta, {\omega^{\rm prm}}^*), &\text{  and   } \hat R_{\mathcal D}^{\rm dvp}(\Delta, \omega^{\rm avg}) \geq \hat R_{\mathcal D}^{\rm dvp}(\Delta, {\omega^{\rm prm}}^*)\notag \\
    \Rightarrow \hat R_{\mathcal D}^{\rm dvp}(\Delta, \omega^{\rm fix}) \geq &\hat R_{\mathcal D}^{\rm dvp}(\Delta, {\omega^{\rm prm}}^*). \notag
\end{align}
\end{proof}

\subsection{Proof of Corollary \ref{coro}}
\label{app:coro}
\begin{proof}
    According to Lemma \ref{lemma:reweight}, $\forall \Delta, \hat R_{\mathcal D}^{\rm dvp}(\Delta, \omega^{\rm fix}) \geq \hat R_{\mathcal D}^{\rm dvp}(\Delta, {\omega^{\rm prm}}^*)$, then we have:
    \begin{align}
        \hat R_{\mathcal D}^{\rm dvp}(\Delta^*_1, \omega^{\rm fix}) \geq
        \hat R_{\mathcal D}^{\rm dvp}(\Delta^*_1, {\omega^{\rm prm}}^*) \geq \hat R_{\mathcal D}^{\rm dvp}(\Delta^*_2, {\omega^{\rm prm}}^*), \notag
    \end{align}
    where $\Delta^*_1$ is the optimized $\Delta$ for $\hat R_{\mathcal D}^{\rm dvp}(.,\omega^{\rm fix})$, while $\Delta^*_2$ is the jointly optimized $\Delta$ considering ${\omega^{\rm prm}}^*$. To avoid confusion, we use $\Delta^*_1$ and $\Delta^*_2$ to represent different $\Delta^*$s in Lemma \ref{lemma:decouple} and Corollary \ref{coro}.
    
    Lemma \ref{lemma:decouple} proves $\hat R_{\mathcal D}^{vr}(\delta^*, \omega^{\rm fix}) \geq \hat R_{\mathcal D}^{\rm dvp}(\Delta^*, \omega^{\rm fix})$, i.e., $\hat R_{\mathcal D}^{vr}(\delta^*, \omega^{\rm fix}) \geq \hat R_{\mathcal D}^{\rm dvp}(\Delta^*_1, \omega^{\rm fix})$. Applying Lemma \ref{lemma:decouple} in the above equation, it can be obtained that:
    \begin{align}
        & \hat R_{\mathcal D}^{vr}(\delta^*, \omega^{\rm fix}) \geq \hat R_{\mathcal D}^{\rm dvp}(\Delta^*_1, \omega^{\rm fix}) \geq \hat R_{\mathcal D}^{\rm dvp}(\Delta^*_2, {\omega^{\rm prm}}^*) \notag \\
        & \Rightarrow \hat R_{\mathcal D}^{vr}(\delta^*, \omega^{\rm fix}) \geq \hat R_{\mathcal D}^{\rm dvp}(\Delta^*_2, {\omega^{\rm prm}}^*).\notag
    \end{align}
    That is, $\hat R_{\mathcal D}^{vr}(\delta^*, \omega^{\rm fix}) \geq \hat R_{\mathcal D}^{\rm dvp}(\Delta^*, {\omega^{\rm prm}}^*)$, as $\Delta^*, {\omega^{\rm prm}}^*$ are jointly optimzed.
\end{proof}

\section{Appendix 3: More Experimental Results}
\label{app:exp}

\subsection{Hyper-parameters}
\label{app:hyper}
\subsubsection{Choosing Hyper-parameters}

\begin{table*}[h]
\centering
\caption{Choosing Appropriate Hyper-parameter $k$ with the zero-shot accuracy}
\vspace{0.1cm}
\footnotesize
\begin{sc}
\begin{tabular}{c|ccccccccccc|c}
\toprule
$k$ & Aircraft & Caltech & Cars  & DTD   & ESAT  & Flowers & Food  & Pets  & SUN   & UCF   & Resisc & Avg.  \\
\midrule
1 & 27.00    & 93.67   & 66.67 & 55.67 & 51.26 & 84.53   & 86.36 & 91.25 & 66.08 & 70.74 & 61.65  & 68.63 \\
\rowcolor{gray!30}
3 & 27.36    & 93.63   & 66.52 & 55.44 & 52.12 & 84.41   & 86.43 & 91.39 & 66.28 & 70.79 & 61.67  & \textbf{68.73} \\
5 & 27.78    & 93.31   & 66.46 & 54.55 & 51.32 & 83.52   & 86.54 & 91.25 & 65.61 & 70.58 & 61.43  & 68.39 \\
7 & 27.69    & 93.31   & 66.47 & 54.43 & 50.17 & 82.54   & 86.55 & 91.22 & 65.15 & 70.26 & 61.33  & 68.10 \\
\bottomrule
\end{tabular}
\end{sc}
\label{tab:chossing_k}
\end{table*}

\begin{table*}[h]
\centering
\caption{Choosing Appropriate Hyper-parameter $v$}
\vspace{0.1cm}
\footnotesize
\begin{sc}
\begin{tabular}{c|ccccccccccc}
\toprule
$v$       & {Aircraft} & {Caltech} & {Cars}  & {DTD}   & {ESAT}  & {Flowers} & {Food}  & {Pets}  & {SUN}   & {UCF}   & {Resisc} \\
\midrule
1       & 38.07             & 87.97            & 56.30          & 87.60          & \textbf{96.47} & 89.47            & 81.77          & 88.57          & 66.50          & 77.40          & 85.70           \\
2       & 41.53             & 89.93            & 59.23          & 93.23          & 96.07          & 91.27            & \textbf{83.20} & 89.30          & 69.50          & \textbf{82.33} & 88.00           \\
3       & \textbf{45.27}    & \textbf{90.07}   & \textbf{61.17} & \textbf{95.20} & 93.77          & \textbf{93.00}   & 83.17          & \textbf{90.40} & \textbf{71.03} & 82.30          & \textbf{89.03}  \\
\midrule
Optimal & 3                 & 3                & 3              & 3              & 1              & 3                & 2              & 3              & 3              & 2              & 3         \\
\bottomrule
\end{tabular}
\end{sc}
\label{tab:chossing_c}
\end{table*}

For both DVP-cse and DVP-cls, the optimal value of hyper-parameter $k$ is determined based on the average zero-shot accuracy on the validation sets of downstream tasks under different $k$ values in $\{1, 3, 5, 7\}$. The results are shown in Table \ref{tab:chossing_k}. When $k=3$, the average zero-shot accuracy reached its peak. Therefore, we set $k=3$ in this study.

For hyper-parameter $v$ in DVP-cls, we selected the value from $v \in \{1, 2, 3\}$. Since the sample size and number of classes vary across datasets, we determined a dataset-specific $v$ using the validation set for each dataset. The detailed results are shown in Table \ref{tab:chossing_c}.

\begin{figure*}[h]
    \centering
    \includegraphics[width=\linewidth]{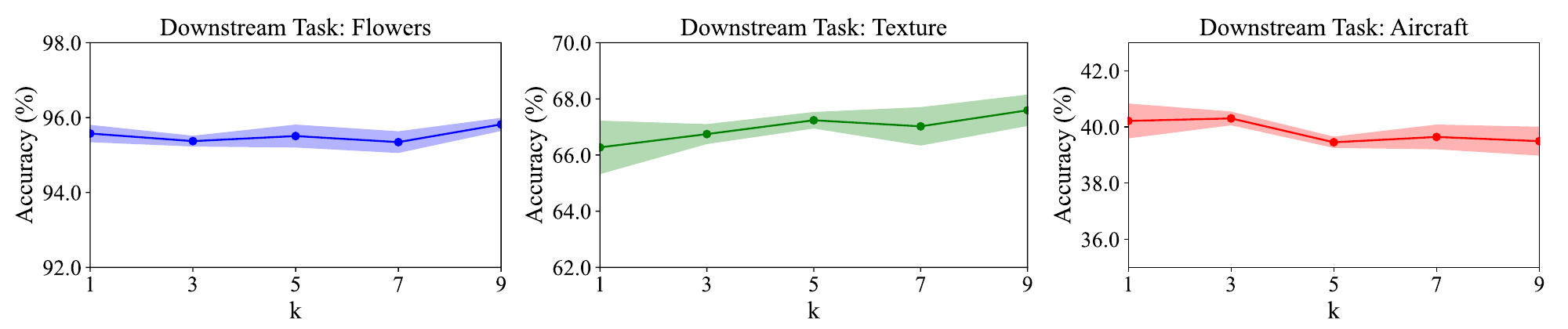}
    \vspace{-0.5cm}
    \caption{Impact of hyper-parameter $k$ on DVP-cse, when $k \in \{1, 3, 5, 7, 9\}$, using ViT-B16-based CLIP as the pretrained model.}
    \label{fig:k_cse}
\end{figure*}

\begin{figure*}[!h]
    \centering
    \includegraphics[width=\linewidth]{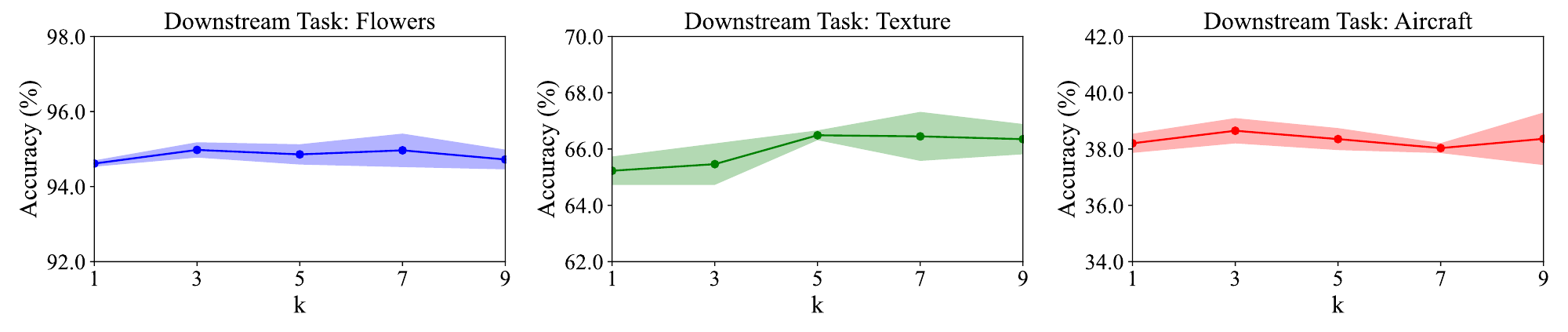}
    \vspace{-0.5cm}
    \caption{Impact of hyper-parameter $k$ on DVP-cls, when $k \in \{1, 3, 5, 7, 9\}$, using ViT-B16-based CLIP as the pretrained model.}
    \label{fig:k_cls}
\end{figure*}

\begin{figure*}[!h]
    \centering
    \includegraphics[width=\linewidth]{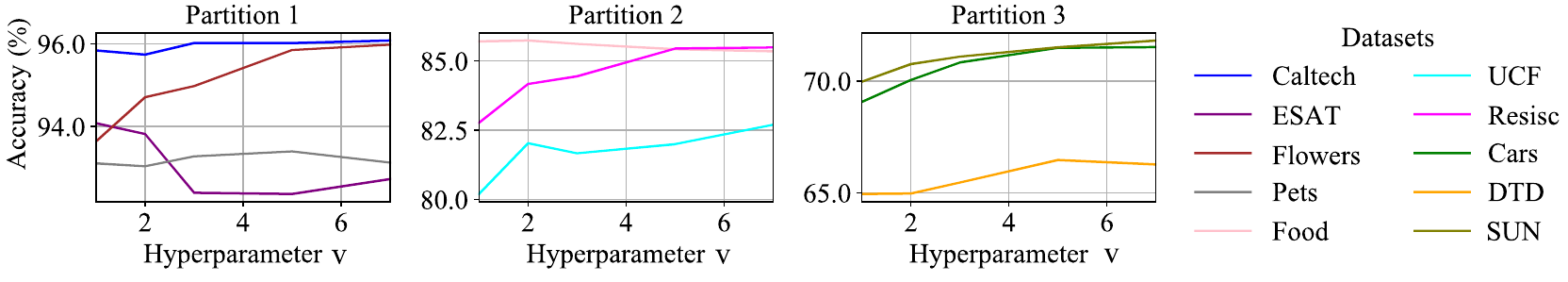}
    \vspace{-0.5cm}
    \caption{Impact of hyper-parameter $v$ on DVP-cls, when $v \in \{1, 2, 3, 5, 7\}$, using ViT-B16-based CLIP as the pretrained model.}
    \label{fig:v}
\end{figure*}

\subsubsection{Impact of Hyper-parameters}
\textbf{Impact of $k$.}
Figures \ref{fig:k_cse} and Figure \ref{fig:k_cls} illustrate the impact of different $k \in\{1, 3, 5, 7, 9\}$ values on DVP-cse and DVP-cls. The $k$ setting is designed to compensate for the limited training data. For datasets with fewer classes and limited training samples, such as the Texture dataset, increasing $k$ within a reasonable range (i.e., k=3, k=5) generally leads to a steady improvement in accuracy. In contrast, for datasets with sufficient samples, such as Flowers and Aircraft, the effect of $k$ is relatively minor.

\textbf{Impact of $v$.}
Figure \ref{fig:v} shows the impact of different $v$ values on DVP-cls. For most datasets, accuracy increases with $v$ before stabilizing, with a few exceptions. One such exception is the ESAT remote sensing dataset, which contains only 10 relatively simple classes. In this case, a larger $v$, implying more VPs and parameters, may lead to overfitting. Another exception is the Food dataset. As mentioned in Section \ref{sec:exp}, VR methods generally perform less effectively on this dataset, so increasing $v$ does not yield significant improvements.

\begin{figure*}[!h]
    \centering
    \includegraphics[width=\linewidth]{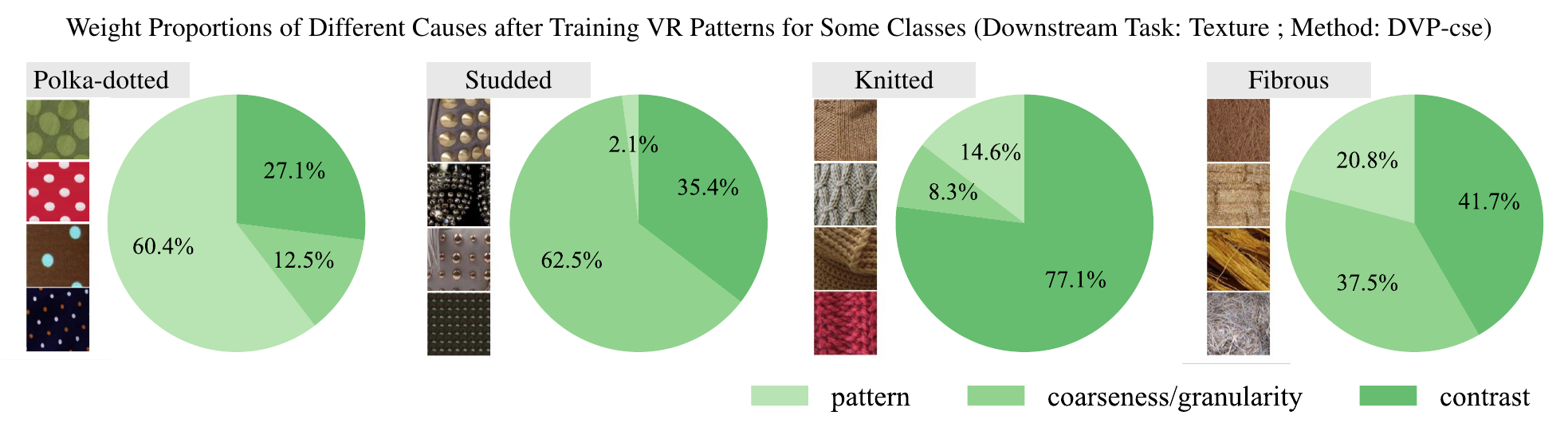}
    \caption{Visualization results of DVP-cse, showing weights of causes for classifying certain classes, using ViT-B16-based CLIP.}
    \label{fig:dtd_pie}
\end{figure*}

\begin{figure*}[!h]
    \centering
    \includegraphics[width=\linewidth]{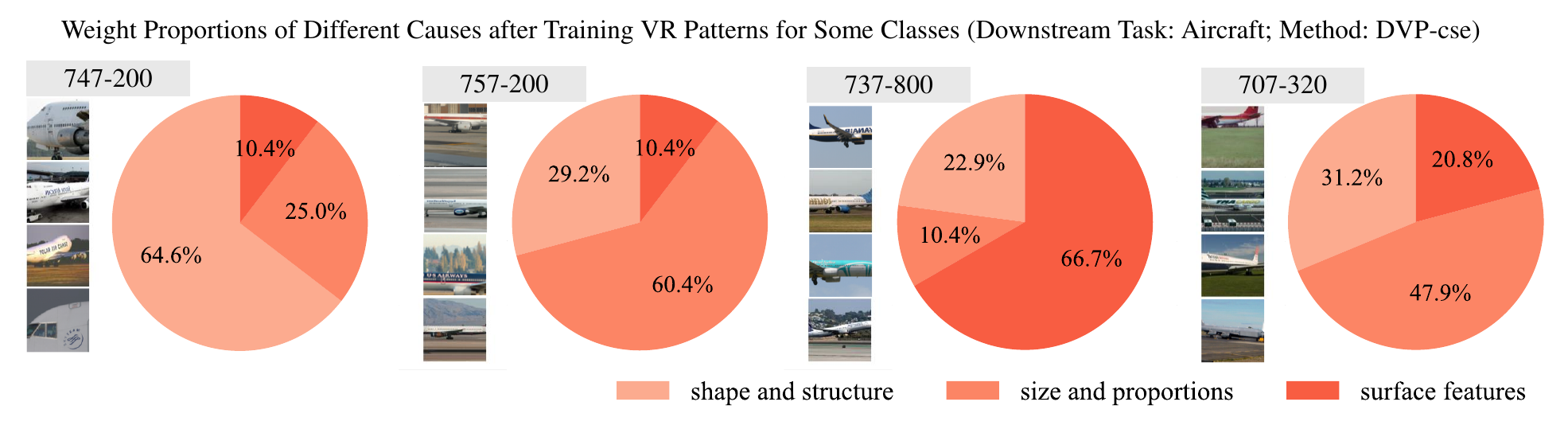}
    \caption{Visualization results of DVP-cse, showing weights of causes for classifying certain classes, using ViT-B16-based CLIP.}
    \label{fig:fgvc_pie}
\end{figure*}

\subsection{More Visualization Results of DVP-cse}
\label{app:vis_cse}
Figure \ref{fig:dtd_pie} shows the causes' weight distribution for specific classes in the DTD dataset, obtained from the PRM trained with DVP-cse. The results offer a degree of interpretability. For instance, the `polka-dotted' class, characterized by spotted patterns, has the highest weight for the `pattern' cause. Similarly, `studded', usually associated with metallic textures, shows the largest weight for `coarseness/granularity'. For classes like `knitted', which are strongly related to weaving, images often exhibit black gaps and solid color areas from the knitting, making contrast the most important classification factor. Meanwhile, classes such as `fibrous', which have diverse characteristics, exhibit a more balanced distribution of cause weights.

Figure \ref{fig:fgvc_pie} shows the cause weight distribution for specific classes in the Aircraft dataset, obtained from the PRM trained with DVP-cse. The obtained weights also aid in understanding the classification process. For instance, the 747-200, as a double-deck aircraft, has a distinctive shape, making `shape and structure' the most critical factor. The 757-200, with a longer fuselage and shorter wings compared with other models, is primarily classified based on `size and proportions'. Meanwhile, the 737-800 typically features two cabin doors and relatively large cabin windows, making `surface features' a key classification criterion.

\subsection{More Visualization Results of DVP-cls}
\label{app:vis_cls}

\begin{figure*}[!h]
    \centering
    \includegraphics[width=\linewidth]{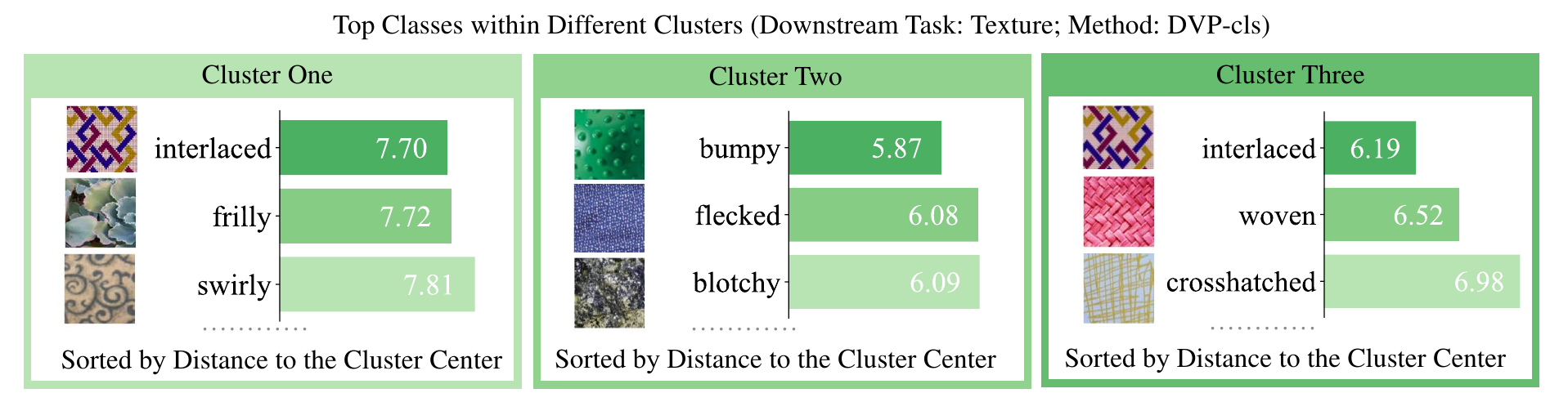}
    \caption{Visualization results of DVP-cls, showing the top 3 classes closest to the description cluster centers, using ViT-B16-based CLIP.}
    \label{fig:dtd_bar}
\end{figure*}
\begin{figure*}[t]
    \centering
    \includegraphics[width=\linewidth]{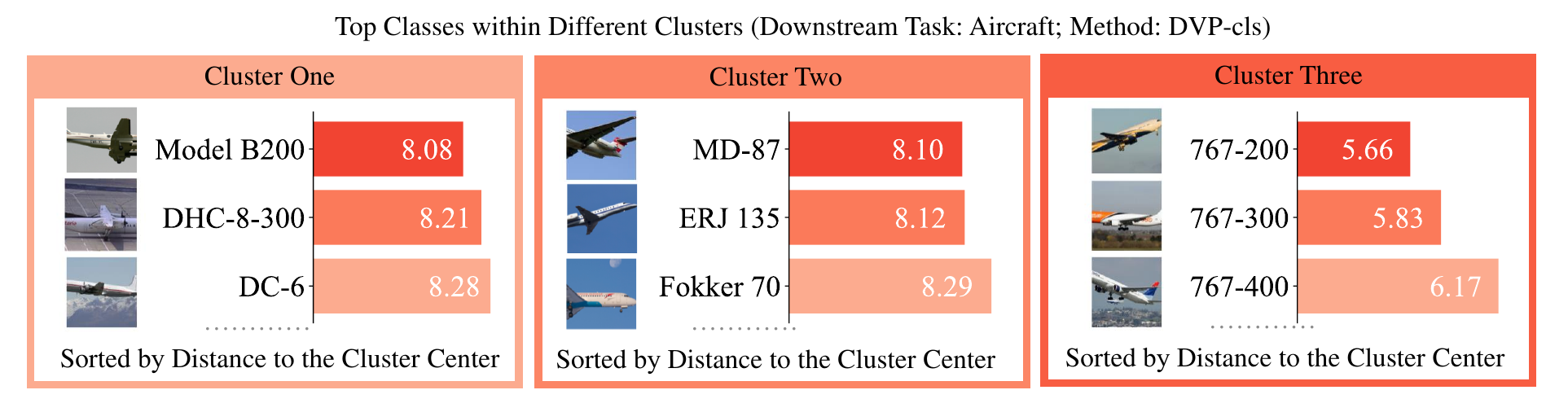}
    \caption{Visualization results of DVP-cls, showing the top 3 classes closest to the description cluster centers, using ViT-B16-based CLIP.}
    \label{fig:fgvc_bar}
\end{figure*}

Figures \ref{fig:dtd_bar} and Figure \ref{fig:fgvc_bar} illustrate the classes closest to the center of each description cluster in DVP-cls (i.e., sorted by the sum of each sample's distance to the cluster center). As discussed in Section \ref{sec:exp}, DVP-cls tends to group descriptions of classes with similar attributes--which might belong to the same subclass--into the same cluster. 

Figure \ref{fig:dtd_bar} presents the results on the DTD dataset, where cluster 1 corresponds to overlayed or layered patterns, such as `interlaced', `frilly', and `swirly'. Cluster 2 represents textures characterized by spots, including `bumpy' and `flecked'. Cluster 3 relates to interwoven patterns, such as `interlaced', `woven', and `crosshatched'. Figure \ref{fig:fgvc_bar}  shows the results on the Aircraft dataset, with consistent conclusions. For example, cluster 3 groups all 767-x models into a single cluster. This clustering effectively reflects the subclass-based organization of different labels.

\subsection{More Results on Different Backbones}
\label{app:backbone}
\begin{table*}[!h]
\centering
\caption{Accuracy comparison of different methods trained on 16-shot downstream classification tasks, using RN50-based CLIP as the pretrained model (Mean \%, ours are \textcolor{black}{\colorbox{gray!30}{highlighted}} and the highest is in \textbf{bold}).}
\begin{sc}
\resizebox{\textwidth}{!}{
\begin{tabular}{c|p{1.25cm}p{1.25cm}p{1.25cm}p{1.25cm}p{1.25cm}p{1.25cm}p{1.25cm}p{1.25cm}p{1.25cm}p{1.25cm}p{1.25cm}|c}
\toprule
Method                & Aircraft      & Caltech       & Cars          & DTD           & ESAT          & Flowers       & Food          & Pets          & SUN           & UCF     & Resisc        & Avg.                           \\
\midrule
 VP  & 16.2          & 80.1          & 44.0          & 43.4          & 59.7          & 53.6          & 65.3          & 77.2          & 48.8          & 52.0          & 47.7          & 53.5 \\
 AR    & 18.6          & 86.5          & 53.9          & 46.4          & 66.6          & 60.9          & 74.2          & 82.5          & 56.8          & 59.7          & 58.4          & 60.4 \\
AttrVR & 20.7          & 89.1          & 53.9          & 54.4          & 72.0          & 74.8          & \textbf{75.3} & 88.9          & 59.9          & 63.6          & 58.2          & 64.6 \\
\rowcolor{gray!30}
DVP-cse & \textbf{23.5} & \textbf{89.8} & \textbf{55.9} & \textbf{57.6} & \textbf{74.4} & \textbf{81.4} & 74.8          & 88.5          & 60.5          & 65.5          & 60.3          & \textbf{66.6} \\
\rowcolor{gray!30}
DVP-cls & 22.1          & 89.8          & 54.5          & 55.9          & 72.2          & 80.0          & 75.0          & \textbf{88.9} & \textbf{61.1} & \textbf{65.9} & \textbf{60.8} & 66.0 \\
\bottomrule
\end{tabular}}
\end{sc}
\label{tab:rn50res}
\end{table*}

\begin{table*}[!h]
\centering
\caption{Accuracy comparison of different methods trained on 16-shot downstream classification tasks, using RN101-based CLIP as the pretrained model (Mean \%, ours are \textcolor{black}{\colorbox{gray!30}{highlighted}} and the highest is in \textbf{bold}).}
\begin{sc}
\resizebox{\textwidth}{!}{
\begin{tabular}{c|p{1.25cm}p{1.25cm}p{1.25cm}p{1.25cm}p{1.25cm}p{1.25cm}p{1.25cm}p{1.25cm}p{1.25cm}p{1.25cm}p{1.25cm}|c}
\toprule
Method                & Aircraft      & Caltech       & Cars          & DTD           & ESAT          & Flowers       & Food          & Pets          & SUN           & UCF     & Resisc        & Avg.                           \\
\midrule
 VP   & 19.3          & 83.0          & 53.7          & 43.4          & 62.8          & 57.2          & 71.2          & 80.2          & 53.5          & 54.2          & 54.0          & 57.5 \\
 AR     & 19.5          & 89.7          & 62.0          & 46.3          & 70.4          & 60.4          & 78.0          & 84.4          & 58.4          & 60.6          & 60.2          & 62.7 \\
AttrVR & 23.3          & 92.0          & 62.2          & 55.6          & 70.3          & 76.2          & \textbf{79.5} & 89.3          & 62.1          & 64.5          & 64.5          & 67.2 \\
\rowcolor{gray!30}
DVP-cse & \textbf{24.7} & \textbf{93.3} & \textbf{64.1} & \textbf{59.2} & \textbf{73.3} & \textbf{82.9} & 79.3          & \textbf{90.4} & 62.4          & 67.6          & 67.8          & \textbf{69.6} \\
\rowcolor{gray!30}
DVP-cls & 23.8          & 92.7          & 62.5          & 58.0          & 70.7          & 80.6          & 79.1          & 89.5          & \textbf{63.7} & \textbf{68.1} & \textbf{68.4} & 68.8 \\
\bottomrule
\end{tabular}}
\end{sc}
\label{tab:rn101res}
\end{table*}

\begin{table*}[!h]
\centering
\caption{Accuracy comparison of different methods trained on 16-shot downstream classification tasks, using ViT-B32-based CLIP as the pretrained model (Mean \%, ours are \textcolor{black}{\colorbox{gray!30}{highlighted}} and the highest is in \textbf{bold}).}
\begin{sc}
\resizebox{\textwidth}{!}{
\begin{tabular}{c|p{1.25cm}p{1.25cm}p{1.25cm}p{1.25cm}p{1.25cm}p{1.25cm}p{1.25cm}p{1.25cm}p{1.25cm}p{1.25cm}p{1.25cm}|c}
\toprule
Method                & Aircraft      & Caltech       & Cars          & DTD           & ESAT          & Flowers       & Food          & Pets          & SUN           & UCF     & Resisc        & Avg.                           \\
\midrule
 VP  & 24.3          & 92.3          & 58.6          & 54.9          & 85.9          & 71.2          & 75.0          & 86.8          & 61.0          & 67.3          & 73.9          & 68.3 \\
 AR    & 21.8          & 92.7          & 56.9          & 49.9          & 85.6          & 66.7          & 75.7          & 84.7          & 59.9          & 63.5          & 71.6          & 66.3 \\
AttrVR & 24.5          & 92.0          & 56.6          & 56.8          & 88.6          & 77.8          & \textbf{77.2} & 89.8          & 62.8          & 67.9          & 73.9          & 69.8 \\
\rowcolor{gray!30}
DVP-cse & \textbf{27.3} & \textbf{93.4} & \textbf{58.1} & \textbf{58.8} & \textbf{89.1} & \textbf{83.1} & 76.8          & \textbf{89.8} & 63.5          & 69.0          & 75.4          & \textbf{71.3} \\
\rowcolor{gray!30}
DVP-cls & 26.1          & 92.9          & 56.5          & 57.2          & 88.5          & 82.5          & 77.0          & 89.2          & \textbf{64.2} & \textbf{70.5} & \textbf{76.0} & 71.0 \\
\bottomrule
\end{tabular}}
\end{sc}
\label{tab:vitb32res}
\end{table*}

Tables \ref{tab:rn50res} to \ref{tab:vitb32res} present the detailed results of various VR methods on different datasets when CLIP adopts different image encoder architectures (RN50, RN101, ViT-B32). On average, DVP-cse outperforms the baseline method AttrVR by 1.5\% to 2.4\%, while DVP-cls, though slightly less effective than DVP-cse, still achieves an average improvement of 1.2\% to 1.6\% over AttrVR. Notably, the performance gains from DVP are more pronounced when the pretrained image encoder uses simpler architectures, such as RN50 and RN101. The only exception among these datasets is Food. As discussed in Section \ref{sec:exp}, VR methods are less effective on this dataset, and thus DVP’s decoupling-and-reweighting framework does not improve accuracy for this task.
\subsection{Discussion about Training Time}
\label{app:time}
\begin{table*}[h]
\centering
\caption{Accuracy comparison of AttrVR with DesAttr and our methods with only one VP trained, using ViT-B16-based CLIP as the pretrained model (Mean \%, ours are \textcolor{black}{\colorbox{gray!30}{highlighted}} and the highest is in \textbf{bold}).}
\begin{sc}
\resizebox{\textwidth}{!}{
\begin{tabular}{c|ccccccccccc|c}
\toprule
                 & Aircraft      & Caltech       & Cars          & DTD           & ESAT          & Flowers       & Food          & Pets          & SUN           & UCF           & Resisc        & Avg. \\
\midrule
AttrVR (DesAttr) & 35.9          & 95.6          & 68.2          & 64.4          & 93.8          & 92.4          & 85.7          & 93.0          & 67.7          & 78.6          & 81.8          & 77.9 \\
\rowcolor{gray!30}
DVP-cse (num=1)  & \textbf{37.2} & \textbf{96.0} & \textbf{70.1} & \textbf{66.3} & 93.8          & \textbf{93.8} & 85.6          & \textbf{93.3} & 68.4          & 79.7          & 81.7          & \textbf{78.7} \\
\rowcolor{gray!30}
DVP-cls (num=1)  & 36.4          & 95.8          & 69.1          & 65.3          & \textbf{94.1} & 93.6          & \textbf{85.7} & 93.1          & \textbf{70.0} & \textbf{80.2} & \textbf{82.8} & \textbf{78.7} \\
\bottomrule
\end{tabular}}
\end{sc}
\label{tab:1VP}
\end{table*}

\begin{table*}[!h]
\centering
\caption{Training cost of different VR methods, using the ViT-B16-based CLIP as the pretrained model and the Flowers task as an example (ours are \textcolor{black}{\colorbox{gray!30}{highlighted}}).}
\footnotesize
\begin{sc}
\begin{tabular}{c|ccc>{\columncolor{gray!30}}c>{\columncolor{gray!30}}c}
\toprule
                      & VP        & AR        & AttrVR    & DVP-cse (num=1) & DVP-cls (num=1)\\
\midrule
Parameter Number                 & 69.8k     & 39.9k     & 39.9k     & 39.9k & 39.9k \\
Training Time for each Epoch (s) & 2.97±0.02 & {2.85±0.03} & {2.83±0.03} &  2.87±0.02& 2.88±0.04\\ 
Training Time in Total (min)     & 9.78±0.07 & {9.44±0.05} & {9.54±0.04} & 9.70±0.03 & 9.66 ± 0.07\\
\bottomrule
\end{tabular}
\end{sc}
\label{tab:cost}
\end{table*}

A potential limitation of DVP is the increased training time required due to the larger number of VPs that need to be optimized. However, since each VP is trained independently, the process can be parallelized, minimizing any significant increase in overall training time. Additionally, the time cost of computing the PRM primarily involves matrix operations, which do not require gradient calculations. Compared with the time-consuming forward pass of CLIP, this overhead is negligible.

Even under the constraint that DVP must use only one VP as the baseline method for a fair comparison, it can still achieve improvements over the baseline with almost no additional time overhead. Table \ref{tab:1VP} compares AttrVR and DVP when only a single VP is used, while Table \ref{tab:cost} shows the runtime of various VR methods. For fairness, the number of descriptions is kept consistent, and only one VP is used for each method. The results demonstrate that both DVP-cse and DVP-cls can deliver an average accuracy improvement with negligible extra computational cost.

\begin{figure*}[h]
    \centering
    \includegraphics[width=\linewidth]{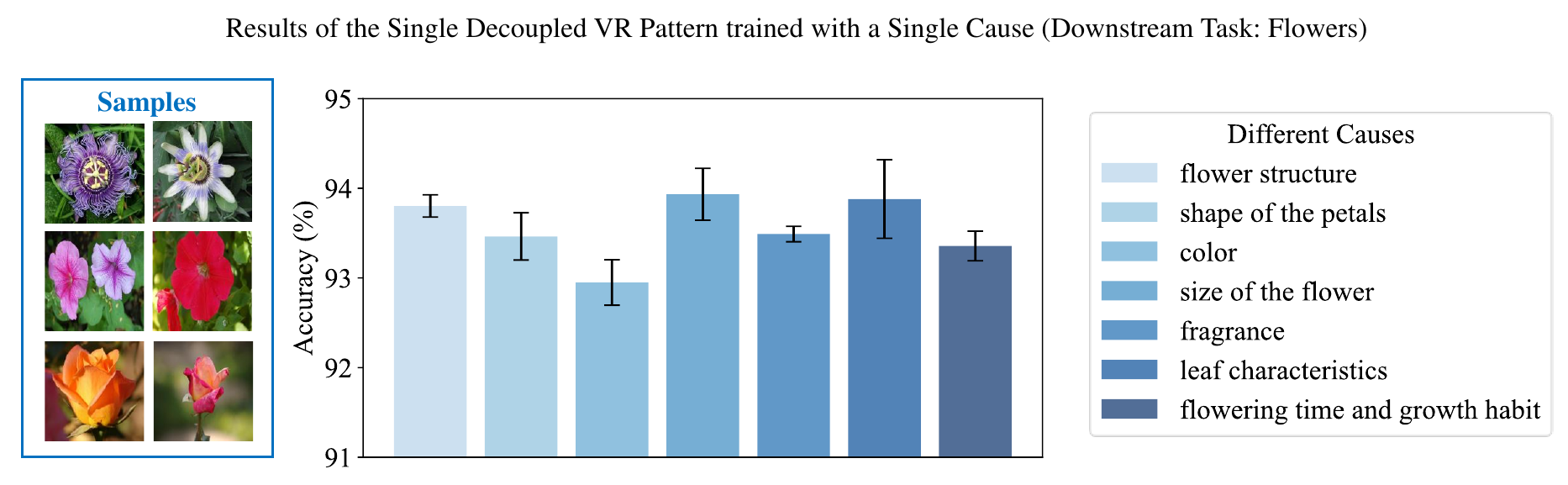}
    \caption{Comparison results of different causes for classifying Flowers applying DVP-cse with a single cause, using ViT-B16-based CLIP.}
    \label{fig:cause_flowers}
\end{figure*}

\begin{figure*}[!h]
    \centering
    \includegraphics[width=\linewidth]{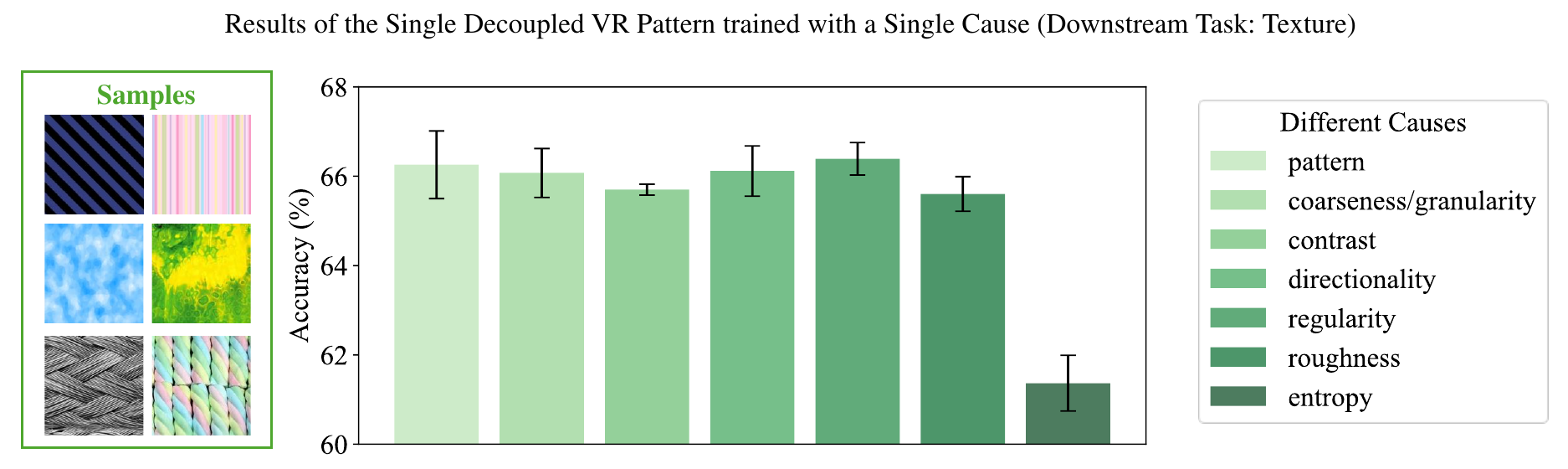}
    \caption{Comparison results of different causes for classifying DTD applying DVP-cse with a single cause, using ViT-B16-based CLIP.}
    \label{fig:cause_dtd}
\end{figure*}

\begin{figure*}[!h]
    \centering
    \includegraphics[width=\linewidth]{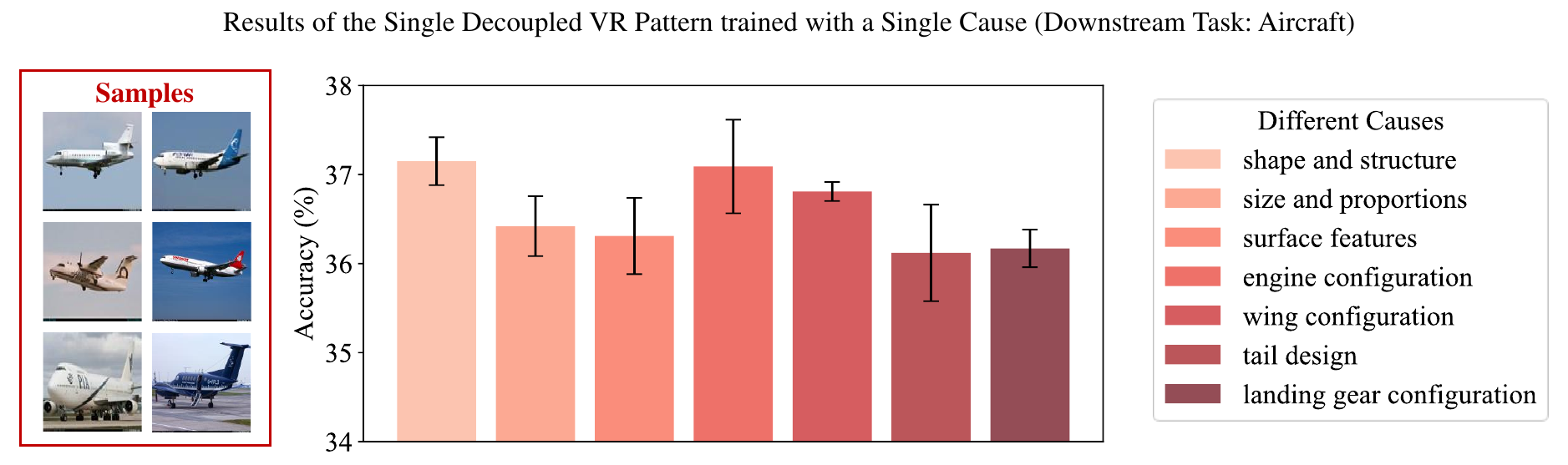}
    \caption{Comparison results of different causes for classifying Aircraft applying DVP-cse with a single cause, using ViT-B16-based CLIP.}
    \label{fig:cause_fgvc}
\end{figure*}

\subsection{Exploring Impact of Different Causes}
\label{app:causes}
Figure \ref{fig:cause_flowers} demonstrates the impact of using a single cause in DVP-cse on the classification results for Flowers. Some causes, such as flower structure, size, and leaf characteristics, significantly contribute to the success of reprogramming. These causes are well-suited for reprogramming because they are visually distinctive, making them easier to optimize through VP patterns, and they are causally linked to correct classification. Conversely, causes like color, fragrance, and flowering time/growth habit are less effective for reprogramming. Color is less suitable because individual flowers within the same class may exhibit color variations, reducing its contribution to accurate classification. Meanwhile, fragrance and flowering time/growth habit are not visual features, making them inherently unsuitable for VR methods.

Figure \ref{fig:cause_dtd} and Figure \ref{fig:cause_fgvc} respectively illustrate the influence of different causes on classification results for the DTD and Aircraft datasets. In Figure \ref{fig:cause_dtd}, it is evident that most causes, except for entropy, yield similar results, while entropy leads to noticeably poorer VR performance. This is because entropy, as a cause, lacks a direct connection to visual information, making it difficult to optimize classification through VR. In contrast, causes such as pattern, contrast, and regularity are directly tied to visual features, facilitating effective reprogramming. Figure \ref{fig:cause_fgvc} highlights that the most suitable causes for VR in the Aircraft dataset are shape and structure, engine configuration, and wing configuration. 

In summary, the most effective attribute descriptions for guiding VR meet two key criteria: (1) they are related to visual information, and (2) they have a causal relationship with accurate classification.

\subsection{DVPlite: A Potential Method for Parameter Reduction}
\label{app:dvplite}
When aiming to enhance the accuracy as DVP does without increasing the parameter overhead, we propose DVPlite as a potential alternative. It enables decoupling-and-reweighting while keeping the parameter number consistent with that of other VR methods.

VR pattern consists of trainable parameters arranged alongside the four edges of the image (i.e., padding). In DVP, each VR pattern is trained independently, resulting in a substantial increase in the total number of parameters. The proposed DVPlite method in this section divides the VR pattern into four independent parts, with a total number of trainable parameters equal to that of AttrVR. Each part is positioned at the top, bottom, left, and right peripheries, as is shown in Figure~\ref{fig:dvplite}.

\begin{figure*}[!t]
    \centering
    \includegraphics[width=\linewidth]{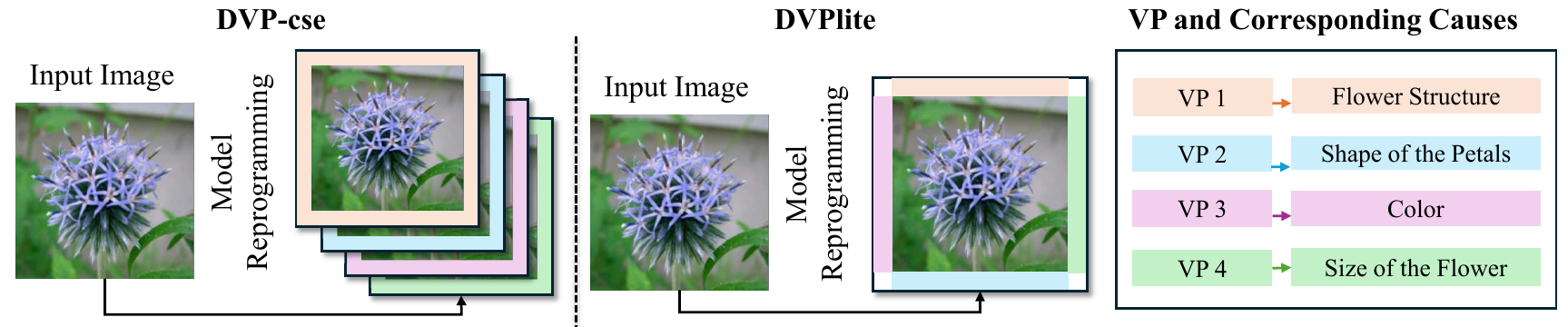}
    \caption{Pipeline of the original DVP-cse and DVPlite under the setting of four decoupled causes, using the Flowers dataset as an example. For different causes, DVP-cse requires training multiple separate VPs, multiplying the parameters. In contrast, DVPlite decouples the VPs into up, down, left, and right directions, assigning them to different causes, which maintains parameter count comparable to other VR methods.}
    \label{fig:dvplite}
\end{figure*}

\begin{table*}[h]
\centering
\caption{Accuracy comparison of our DVPlite and the best-performing baseline method AttrVR trained on 16-shot downstream classification task, using ViT-B16-based CLIP as the pretrained model (Mean \%, ours is \textcolor{black}{\colorbox{gray!30}{highlighted}} and the highest is in \textbf{bold}).}
\footnotesize
\begin{sc}
\resizebox{\textwidth}{!}{
\begin{tabular}{c|ccccccccccc|c}
\toprule
                & Aircraft & Caltech & Cars & DTD  & ESAT & Flowers & Food & Pets & SUN  & UCF & Resisc & Avg. \\
\midrule
AttrVR (Baseline) & 36.6     & 95.7    & 68.3 & 65.6 & \textbf{93.8} & 92.9    & \textbf{85.9} & 93.3 & 69.6 & 79  & 82.6   & 78.5    \\
\rowcolor{gray!30} DVPlite         & \textbf{39.3}     & \textbf{95.9}    & \textbf{71.4} & \textbf{66.5} & \textbf{93.8} & \textbf{95.2}    & 85.8 & \textbf{93.4} & \textbf{71.6} & \textbf{81}  & \textbf{83.6}   & \textbf{79.8} \\
\bottomrule
\end{tabular}}
\end{sc}
\label{tab:dvplite}
\end{table*}

\begin{table*}[h]
\centering
\caption{A comparison of parameter numbers for different methods and their average accuracy across 11 datasets, using ViT-B16-based CLIP as the pretrained model (Mean \%, ours are \textcolor{black}{\colorbox{gray!30}{highlighted}}.}
\footnotesize
\begin{sc}
\resizebox{0.8\textwidth}{!}{
\begin{tabular}{c|ccc>{\columncolor{gray!30}}c>{\columncolor{gray!30}}c>{\columncolor{gray!30}}c}
\toprule
                               & VP    & AR    & AttrVR & DVP-cse & DVP-cls    & DVPlite \\
\midrule
Parameter Numbers              & 0.07M & 0.04M & 0.04M  & 0.12M   & 0.04-0.12M & 0.04M   \\
Average accuracy over 11 tasks & 74.4  & 76.5  & 78.5   & 80.1    & 79.7       & 79.8   \\
\bottomrule
\end{tabular}}
\end{sc}
\label{tab:params}
\end{table*}

Accuracy comparisons of our DVPlite and the best-performing baseline method AttrVR are shown in Table~\ref{tab:dvplite}, and the parameter numbers are shown in Table~\ref{tab:params}. In the DVPlite experiments presented here, the assignment of which specific VP was applied to which spatial location was determined randomly (and kept fixed for that run). DVPlite is then implemented under our proposed two-stage DVP-PRM framework, with the four parts trained independently. 

Therefore, the core DVP-PRM principles yield benefits without increasing the parameter count. DVPlite outperformed the AttrVR baseline with an identical parameter budget, demonstrating the effectiveness of the decoupling-and-reweighting strategy itself. However, investigating structured or learnable assignments of VPs to spatial locations instead of random assignments might be an interesting open question that deserves formal and systematic exploration in future studies.

\subsection{Applying DVP-PRM to Different VR methods}
\label{app:VRs}
\begin{table*}[h]
\centering
\caption{Results of applying our DVP-PRM framework to various VR methods, including VP and AR (for AR, results equal to that of DVP-cse and DVP-cls), trained on a 16-shot downstream classification task, using ViT-B16-based CLIP as the pretrained model.}
\footnotesize
\begin{sc}
\resizebox{\textwidth}{!}{
\begin{tabular}{c|ccccccccccc|c}
\toprule
              & Aircraft & Caltech & Cars  & DTD   & ESAT  & Flowers & Food  & Pets  & SUN   & UCF   & Resisc & Avg.  \\
\midrule
VP + ours-cse (\%) & + 6.2    & + 1.2   & + 4.3 & + 4.0 & + 0.6 & + 9.2   & + 0.2 & + 1.1 & + 2.4 & + 2.6 & + 2.4  & + 3.1 \\
VP + ours-cls (\%) & + 6.3    & + 0.8   & + 1.2 & + 2.0 & + 0.0 & + 8.4   & + 0.2 & + 1.1 & + 2.5 & + 3.5 & + 2.7  & + 2.6 \\
AR + ours-cse (\%) & + 8.6    & + 0.7   & + 4.5 & + 4.7 & + 0.5 & + 9.5   & + 0.4 & + 0.4 & + 3.2 & + 3.6 & + 3.0  & + 3.6 \\
AR + ours-cls (\%) & + 7.0    & + 0.5   & + 2.8 & + 3.5 & + 0.7 & + 9.1   & + 0.5 & + 0.6 & + 3.2 & + 3.9 & + 2.8  & + 3.1 \\
\bottomrule
\end{tabular}}
\end{sc}
\label{tab:otherVR}
\end{table*}
Given the existence of various VR methods, including watermarking-based (i.e. VP\citep{bahng2022exploring}) and padding-based (i.e. AR~\citep{tsai2020transfer}), we apply the proposed DVP-PRM framework to both types to evaluate its performance. Notably, applying the framework to the padding-based VR setting corresponds to the DVP-cse and DVP-cls results presented in this paper. Performance improvements achieved by incorporating our module into different VR methods are summarized in Table~\ref{tab:otherVR}. These results suggest that our method is effective for both watermarking-based and padding-based VR, consistently providing performance gains across these different approaches.

\subsection{Training Results Under Sparse Data Conditions}
\label{app:shots}
\begin{table*}[h]
\centering
\caption{Accuracy comparison of different VR methods with fewer or more training samples (i.e., 1-, 4-, 8-, and 32-shot tasks), using ViT-B16-based CLIP as the pretrained model and the Aircraft task as an example (Mean \%±Std \%, ours are \textcolor{black}{\colorbox{gray!30}{highlighted}} and the highest is in \textbf{bold}).}
\footnotesize
\begin{sc}
\resizebox{0.52\textwidth}{!}{
\begin{tabular}{c|ccc>{\columncolor{gray!30}}c>{\columncolor{gray!30}}c}
\toprule
 & VP       & AR       & AttrVR            & DVP-cse           & DVP-cls  \\
\midrule
1-shot              & 24.1\scriptsize{±0.4} & 25.1\scriptsize{±0.4} & \textbf{31.5}\scriptsize{±0.2} & 28.1\scriptsize{±0.8}        & 29.4\scriptsize{±0.5} \\
4-shot              & 27.5\scriptsize{±0.2} & 27.7\scriptsize{±0.4} & \textbf{33.4}\scriptsize{±0.6} & 32.3\scriptsize{±0.1}         & 32.5\scriptsize{±0.3} \\
8-shot              & 29.5\scriptsize{±0.3} & 30.1\scriptsize{±0.1} & 35.1\scriptsize{±0.3}          & \textbf{36.4}\scriptsize{±0.1} & 35.9\scriptsize{±0.5} \\
16-shot             & 32.1\scriptsize{±0.6} & 31.7\scriptsize{±0.3} & 36.6\scriptsize{±0.3}         & \textbf{40.3}\scriptsize{±0.2} & 38.7\scriptsize{±0.4} \\
32-shot             & 34.5\scriptsize{±0.4} & 34.2\scriptsize{±0.7} & 38.4\scriptsize{±0.3}          & \textbf{43.1}\scriptsize{±0.8} & 41.2\scriptsize{±0.5} \\
\bottomrule
\end{tabular}}
\end{sc}
\label{tab:shots}
\end{table*}

To evaluate the performance of DVP under varying amounts of training data, we conducted experiments on the Aircraft dataset with 1-, 4-, 8-, 16-, and 32-shot settings. The results are presented in Table~\ref{tab:shots}\footnote{Note that we conducted additional experiments and correct the 8-shot and 32-shot results reported in the rebuttal, but these corrections do not affect the conclusions presented during rebuttal.}. 

It can be observed that under extremely limited training conditions, such as the 1-shot and 4-shot scenarios, DVP performs worse than the baseline methods. This is attributed to the fact that the MLE in the PRM module becomes more reliable with larger amounts of data. The suboptimal performance under very low-data conditions highlights one of the limitations of DVP.

However, the performance improvement of DVP becomes increasingly evident when the number of training shots exceeds 8 for the Aircraft dataset, and is particularly pronounced under the 32-shot setting. This suggests that as the amount of training data increases, DVP can better leverage its modeling capacity, resulting in more substantial gains.

\section{Appendix 4: Problem Setup \& Notation}
\label{app:setup_notation}

\subsection{Problem Setup Comparison}
\label{app:setup_comparison}

We clarify the differences between the problem this work focuses on and another active research theme to avoid confusion.

\textbf{Visual Reprogramming (This Work).}
The focus of this paper is to address the challenge of adapting CLIP, pretrained on aligned image-text pairs, to downstream classification tasks where input dimensions mismatch the original model ($d_{\rm T} \neq d_{\rm S}$).
Following established visual reprogramming protocols, we learn a set of visual prompts (VPs) $\Delta = {\delta_1, \dots, \delta_v}$ that perturb input images to align them with CLIP's pretrained feature space. Formally, given a downstream dataset $\mathcal{D} = \{(x_j^{\rm T}, y_j^{\rm T}) \}_{j=1}^N$ with images $x_j^{\rm T} \in \mathcal{X}^{\rm T}$ and labels $y_j^{\rm T} \in \mathcal{Y}^{\rm T}$, we optimize $\Delta$ to minimize the cross-entropy loss:
\begin{equation*}
    \Delta^* = \arg \min_{\Delta} \left( -\frac{1}{N} \sum_{j=1}^{N} \log p(y_j^{\rm T} | f_{\rm in}(x_j^{\rm T}; \Delta), \gA) \right).
\end{equation*}
where $f_{\rm in}(x^{\rm T}; \Delta) = \mathtt{pad}(x^{\rm T}) + \Delta$ pads to match CLIP's expected dimensions and applies {\em trainable visual prompts} $\Delta$.

\textbf{Text Prompt Tuning.}
In contrast, text prompt tuning methods like CoOp~\citep{zhou2022conditional} adapt pretrained CLIP by modifying its {\em text} input space.
Instead of perturbing images, these studies aim to optimize a set of continuous context vectors $\mathbf{C} = [\mathbf{c}_1, \dots, \mathbf{c}_M]$ prepended to class names, forming prompts such as ``''$\mathbf{c}_1 \cdots \mathbf{c}_M$ [Class Info].''
For class $y$, its embedding becomes
\begin{equation*}
    f_{\rm txt}(\mathbf{C}, y) = f_{\rm txt}\left({\tt concat}(\mathbf{c}_1, \dots, \mathbf{c}_M, f_{\rm tokenize}(y))\right).
\end{equation*}
where $f_{\rm tokenize}(y)$ tokenizes the class name $y$.
The learning objective maximizes the alignment between image embeddings $f_{\rm img}(x^{\rm T})$ and adapted text embeddings $f_{\rm txt}(\mathbf{C}, y^{\rm T})$.
\begin{equation*}
    \mathbf{C}^* = \arg \min_{\mathbf{C}} \left( -\frac{1}{N} \sum_{j=1}^{N} \log p(y_j^{\rm T} | x_j^{\rm T}; \mathbf{C}) \right).
\end{equation*}

\subsection{Notations}
\label{app:notation}

We then provide a list of key mathematical notations used throughout the paper to ensure clarity and consistency.
\subsubsection{General Concepts regarding CLIP Model}
\begin{table}[H]
\centering
\caption{General Concepts and Notation}
\label{tab:notation_general_clip}
\small 

\begin{tabular}{l|l}
\toprule
\textbf{Symbol} & \textbf{Description} \\
\midrule
CLIP & Contrastive Language-Image Pre-training model \citep{radford2021learning}. \\
$f_{\rm img}$ & CLIP's image encoder. \\
$f_{\rm txt}$ & CLIP's text encoder. \\
$\mathcal{X}^{\rm S}$ & Input image space for the pre-trained CLIP model (i.e., source domain images). $\mathcal{X}^{\rm S} \subseteq \mathbb{R}^{d_{\rm S}}$. \\
$\mathcal{V}$ & Text space containing all textual descriptions. \\
$\mathcal{Z}$ & Shared embedding space for images and text. \\
$V$ & A textual phrase or description from $\mathcal{V}$. \\
$\mathcal{X}^{\rm T}$ & Input image space for a downstream task (target domain images). $\mathcal{X}^{\rm T} \subseteq \mathbb{R}^{d_{\rm T}}$. \\
$\mathcal{Y}^{\rm T}$ & Label space for the downstream task. \\
$x^{\rm T}$ & An image from the target domain $\mathcal{X}^{\rm T}$. \\
$y^{\rm T}$ & A label from the target label space $\mathcal{Y}^{\rm T}$. \\
$d_{\rm S}$ & Dimensionality of source domain images. \\
$d_{\rm T}$ & Dimensionality of target domain images. \\
$f_{\rm clip}(x^{\rm S}, V)$ & CLIP similarity score between image $x^{\rm S}$ and text $V$. Defined in \eqref{eq:clip_score}. \\
$\tau$ & Temperature parameter in CLIP's similarity computation. \\
$\mathcal{A}$ & The complete set of textual descriptions used for all classes in a downstream task. \\
$\mathcal{A}(y^{\rm T})$ & Subset of $\mathcal{A}$ containing $m$ descriptions specifically for class $y^{\rm T}$. \\
$m$ & Number of textual descriptions per class. \\
$[f_{\rm logits}(x^{\rm T}; \mathcal{A})]_{y^{\rm T}}$ & Logit for class $y^{\rm T}$ given image $x^{\rm T}$, computed using standard CLIP when $d_{\rm T}=d_{\rm S}$. \\
$\mathrm{agg}(\cdot)$ & Generic aggregation function (e.g., ${\tt max}(\cdot)$, ${\tt avg}(\cdot)$) over description similarities. \\
$| \cdot |$ & The cardinality of a set, i.e., the number of elements in a set. \\
\bottomrule
\end{tabular}
\end{table}

\subsubsection{Standard Visual Reprogramming (VR)}
\begin{table}[H]
\centering
\caption{Standard Visual Reprogramming Notation}
\label{tab:notation_std_vr}
\small 
\begin{tabular}{l|l}
\toprule
\textbf{Symbol} & \textbf{Description} \\
\midrule
$f_{\rm in}(x^{\rm T} | \delta)$ & Input transformation function in VR, $f_{\rm in}(x^{\rm T} | \delta) = \mathrm{pad}(x^{\rm T}) + \delta$. \\
$\mathrm{pad}(\cdot)$ & Zero-padding function to make $x^{\rm T}$ dimensionally compatible with $f_{\rm img}$. \\
$\delta$ & The single, trainable visual prompt (a vector or tensor in $\sR^{d_{\rm S}}$) in standard VR. \\
$[f^{\rm vr}_{\rm logits}(x^{\rm T}; \delta, \mathcal{A})]_{y^{\rm T}}$ & Logit for class $y^{\rm T}$ given image $x^{\rm T}$ and prompt $\delta$ in standard VR. Defined in \eqref{eq:vr_logits}. \\
$p_{\rm vr}(y^{\rm T}|x^{\rm T}; \delta, \mathcal{A})$ & Normalized probability for class $y^{\rm T}$ in standard VR, obtained via softmax over $f^{\rm vr}_{\rm logits}$. \\
$M_a$ & Row vector $(1 \times |\mathcal{A}|)$ of CLIP similarities between $f_{\rm in}(x^{\rm T}|\delta)$ and all descriptions in $\mathcal{A}$. \\
$M_y$ & Row vector $(1 \times |\mathcal{Y}^{\rm T}|)$ of class logits. \\
$\omega$ & General reweighting matrix $(|\mathcal{A}| \times |\mathcal{Y}^{\rm T}|)$ relating $\mathbf{a}_{x,\delta}^\top$ to $\mathbf{y}_{x,\delta,\omega}^\top$ via $\mathbf{a}_{x,\delta}^\top \, \omega = \mathbf{y}_{x,\delta,\omega}^\top$. \\
$\omega_{p,q}$ & Element of $\omega$, representing the contribution of the $p$-th description to the $q$-th class logit. \\
$\omega^{\rm fix}$ & Fixed reweighting matrix used in standard VR, determined by the choice of ${\tt agg}(\cdot)$ (e.g., ${\tt max}(\cdot)$ or ${\tt avg}(\cdot)$). \\
$\mathcal{D}$ & Downstream training set, consisting of $N$ image-label pairs: $\{ (x^{\rm T}_{j}, y^{\rm T}_{j}) \}_{j=1}^{N}$. \\
$N$ & Total number of samples in the training set $\mathcal{D}$. \\
\bottomrule
\end{tabular}
\end{table}

\subsubsection{Decoupled Visual Prompts (DVP) Framework}
\begin{table}[H]
\centering
\caption{Decoupled Visual Prompts (DVP) Framework Notation}
\label{tab:notation_dvp}
\small 
\begin{tabular}{l|l}
\toprule
\textbf{Symbol} & \textbf{Description} \\
\midrule
$\Delta$ & Set of $v$ visual prompts, i.e., $\Delta = \{\delta_1, \delta_2, \ldots, \delta_v\}$. \\
$\delta_i$ & The $i$-th decoupled visual prompt, specialized for description partition $\mathcal{A}_i$. \\
$\mathcal{A}$ & The full description set. \\
$\mathcal{E}$ & The full set of text embeddings for descriptions in $\mathcal{A}$. \\
$\mathcal{A}_i$ & The $i$-th partition of the full description set $\mathcal{A}$. \\
$\mathcal{E}_i$ & The set of text embeddings corresponding to descriptions in partition $\mathcal{A}_i$. \\
$v$ & Number of decoupled visual prompts and description partitions. \\
DVP-cse & DVP variant where partitions $\mathcal{A}_i$ are formed based on explicit semantic causes from an LLM. \\
DVP-cls & DVP variant where partitions $\mathcal{A}_i$ are formed by unsupervised clustering of description embeddings. \\
PRM ($\omega^{\rm prm}$) & Probabilistic Reweighting Matrix: The learned $(|\mathcal{A}| \times |\mathcal{Y}^{\rm T}|)$ matrix in the DVP framework. \\
$\omega^{\rm prm}_{p,q}$ & Entry of PRM, estimated as $\hat{p}(V=a_p | Y^{\rm T}=y_q, V \in \mathcal{A}(y_q))$. \\
$\mathcal{N}_{\mathcal{D}}(a,y)$ & Co-occurrence count of description $a$ and class $y$ in training set $\mathcal{D}$. Used for PRM estimation. \\
$\gK(x_j^{\rm T}, k)$ & Set of top-$k$ descriptions in $\mathcal{A}$ most similar to $f_{\rm in}(x_j^{\rm T}|\Delta)$ for image $x_j^{\rm T}$. \\
$k$ & Number of top similar descriptions considered for smoothed PRM counting. \\
$p_{\rm vr}(y^{\rm T}|x^{\rm T}; \delta_i, \omega^{\rm prm}, \mathcal{A}_i)$ & Probability derived from $[f_{\rm logits}^{\delta_i}(x^{\rm T})]$. \\
$[f_{\rm logits}^{\rm dvp}(x^{\rm T}; \Delta, \omega^{\rm prm})]_{y_q^{\rm T}}$ & Integrated logit for class $y_q^{\rm T}$ in DVP, combining all prompts $\Delta$ and PRM $\omega^{\rm prm}$. \\
$p_{\rm dvp}(y^{\rm T}|x^{\rm T}; \Delta, \omega^{\rm prm}, \mathcal{A})$ & Final probability for class $y^{\rm T}$ in DVP, from integrated logits. \\
\bottomrule
\end{tabular}
\end{table}

\end{document}